\title{Learning to Select Pivotal Samples for Meta Re-weighting}
\author{
    Yinjun Wu,
    Adam Stein,
    Jacob Gardner,
    Mayur Naik
}
\newcites{supp}{Appendix Reference}
\let\namerefOld\nameref
\renewcommand{\nameref}[1]{\textbf{``\namerefOld{#1}''}}
\newcommand*\widefbox[1]{\fbox{\hspace{0.5em}#1\hspace{0.5em}}}
\newcommand{\revise}[1]{{#1}}
\newcommand{\x}{\textbf{x}}
\newcommand{\f}[2]{f_{#1}(#2)}
\newcommand{\mf}[2]{f_{\text{meta},#1}(#2)}
\newcommand{\y}{y}
\newcommand{\Dtrain}{D_{\text{train}}}
\newcommand{\centroid}{C}
\newcommand{\concatvec}[1]{G_{#1}}
\newcommand{\mx}[1]{\textbf{x}_{\text{meta}, #1}}
\newcommand{\my}[1]{y_{\text{meta}, #1}}
\newcommand{\Dmeta}{D_{\text{meta}}}
\newcommand{\lastxout}[1]{\textbf{x}^{\text{last out}}_{#1}}
\newcommand{\w}{w}
\newcommand{\bw}{\textbf{W}}
\newcommand{\A}{\textbf{A}}
\newcommand{\vect}[1]{\text{vec}(#1)}
\newcommand{\random}{Random}
\newcommand{\uncertain}{Uncertain}
\newcommand{\certain}{Certain}
\newcommand{\finetune}{Fine-tuning}
\newcommand{\ta}{TA-VAAL}
\newcommand{\craige}{craige}
\newcommand{\cifarone}{CIFAR-10}
\newcommand{\cifar}{CIFAR}
\newcommand{\mnist}{MNIST}
\newcommand{\imagenet}{Imagenet-10}
\newcommand{\cifartwo}{CIFAR-100}
\newtheorem{theorem}{Theorem}
\newtheorem{lemma}{Lemma}
\newcommand{\ourmethodonefull}{Representation-based clustering method}
\newcommand{\ourmethodone}{RBC}
\newcommand{\ourmethodtwofull}{Gradient-based clustering method}
\newcommand{\ourmethodtwo}{GBC}
\definecolor{methodonecolor}{RGB}{0,176,80}
\definecolor{methodtwocolor}{RGB}{112,48,160}
\renewcommand{\paragraph}{%
  \@startsection{paragraph}{4}%
  {\z@}{0ex \@plus 0ex \@minus .2ex}{-0.5em}%
  {\normalfont\normalsize\bfseries}%
}
\begin{document}

\maketitle

\begin{abstract}
Sample re-weighting strategies provide a promising mechanism to deal with imperfect training data in machine learning, such as noisily labeled or class-imbalanced data.
One such strategy involves formulating a bi-level optimization problem called the {\em meta re-weighting problem}, whose goal is to optimize performance on a small set of perfect pivotal samples, called {\em meta samples}.
Many approaches have been proposed to efficiently solve this problem.
However, all of them assume that a perfect meta sample set is already provided while we observe that the selections of meta sample set is performance-critical. In this paper, we study how to {\em learn} to identify such a meta sample set from a large, imperfect training set, that is subsequently cleaned and used to optimize performance in the meta re-weighting setting.
We propose a learning framework which reduces the meta samples selection problem to a weighted K-means clustering problem through rigorously theoretical analysis.
We propose two clustering methods within our learning framework, \ourmethodonefull\ (\ourmethodone) and \ourmethodtwofull\ (\ourmethodtwo), for balancing performance and computational efficiency.
Empirical studies demonstrate the performance advantage of our methods over various baseline methods.
\end{abstract}

\section{Introduction}
\label{sec: intro}

Recently, with the advent of the data-centric AI era \cite{miranda2021towards, polyzotis2021can, hajij2021data}, there is an increasing concern about the {\em quality} of data for training neural network models.
How to construct and maintain a high-quality data set is extremely challenging due to the existence of various defects in real-life data, e.g., imperfect labels or imbalanced distributions across classes.
To tackle these issues, various techniques have been explored.
One such example is \revise{the} sample re-weighting \revise{strategy} \cite{shu2019meta,ren2018learning,hu2019learning, jiang2018mentornet,chang2017active}, which target\revise{s} jointly learning to obtain {\em re-weighted} training samples and training neural nets upon them.

One promising strategy for learning to re-weight training samples is to leverage the framework of meta learning \cite{hospedales2021meta, andrychowicz2016learning, thrun2012learning} by formulating this problem as a bi-level optimization problem \cite{shu2019meta,ren2018learning,hu2019learning}.
In this approach, the weights of training samples are learned so that the performance of the models learned on the re-weighted training samples is maximized on a small set of perfect samples---referred to as {\em meta samples}.
Existing works mainly focus on designing computationally efficient algorithms for solving this bi-level optimization problem.
For example, \cite{shu2019meta} propose a {\em meta re-weighting} algorithm which alternates between updates to the model parameters and the sample weights.
These algorithms, however, rely on the assumption that the meta sample set is {\em given}, and they construct this set by random sampling in their empirical studies.
However, as the toy example in Figure \ref{fig:toy_boundaries} shows, randomly selected meta samples may perform worse than carefully selected ones by using our methods (62.9\% vs.\ 87.1\% on test accuracy), which we further verify in Section \nameref{sec: experiment}.

\begin{figure*}
    \centering
    \includegraphics[width=0.9\textwidth]{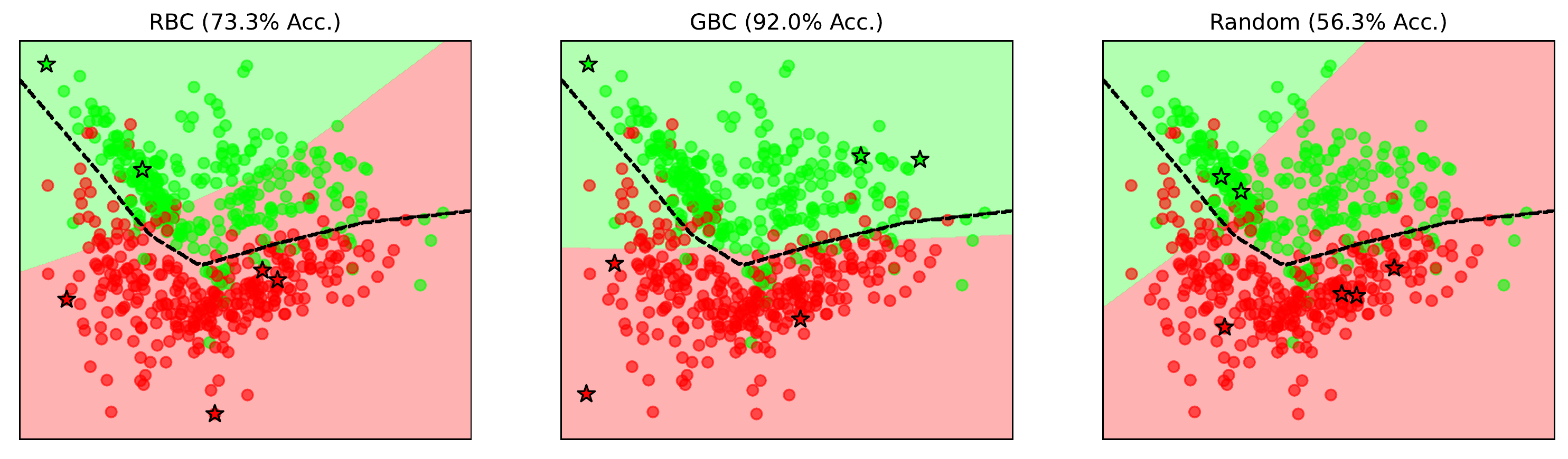}
    \caption{We produce a toy two-dimensional dataset by drawing 1000 samples from a mixture of four Gaussian distributions over two variables where the distributions are centered at the four vertices of the 2-dimensional hypercube. The upper two distributions are labeled green while the lower two are labeled red and 1\% of the labels are flipped to introduce a small amount of noise to the ground truth. We visualize all the samples with their ground-truth labels in this figure. This toy dataset is then divided into 600 training, 240 testing, and 160 validation samples using a random partition, and a randomly selected 60\% of the training labels are flipped. To learn a robust model (which is a neural network with two hidden layers in this example) on this noisy training set, we employ the meta-reweighting algorithm with 6 cleaned meta samples. 
    We then show the selected meta samples (outlined with stars) and the learned decision boundaries (shaded regions) from our methods and the random selection method. The expected classifier, i.e., the learned classifier (black dotted line) on the ``clean'' training set is also visualized. 
    By examining the learned classifier from these methods,
    we see that the one learned by random selections deviates farther from the expected classifier, thus leading to worse prediction performance than our methods (56.3\% vs. 92.0\%).
    }
    \label{fig:toy_boundaries}
\end{figure*}

In this paper, we study how to learn to identify a set of meta samples from a large, imperfect training set such that the {\em meta re-weighting} performance is optimized. Specifically, we propose a framework which reduces the problem of selecting such meta samples to a weighted K-means clustering problem through rigorous theoretical analysis. This derivation basically transforms the formula for iteratively updating sample weights from the meta re-weighting algorithm into a weighted K-means clustering objective function. We can show that \textit{optimizing this objective function can aid in effectively distinguishing high-quality training samples from low-quality ones by giving them more confident sample weights (i.e. weights close to 0 or 1)}. 
This objective function, however, requires the gradients of {\em each individual training sample} as input, which is computationally expensive. To facilitate efficient evaluation of this objective function, we propose two methods, i.e. \ourmethodonefull\ (\ourmethodone) and \ourmethodtwofull\ (\ourmethodtwo), which balance performance with computational efficiency. Specifically, by assuming that the gradients of the bottom layers of the neural nets are insignificant, \ourmethodone\ only utilizes the gradient of the last layer, which is efficiently calculated through feed-forward passes. In contrast, \ourmethodtwo\ samples model parameters such that the estimation of the objective function in the above K-means problem is unbiased. Due to the necessity of explicitly (but partially) computing sample-wise gradients, \ourmethodtwo\ is slower than \ourmethodone, but can lead to better model performance in most cases. 

We further explore whether our methods select reasonable meta samples for re-weighting noisily labeled data and class-imbalanced data by 
conducting experiments on re-weighting \mnist, CIFAR, and \imagenet\ datasets in the presence of noisy labels or imbalanced class distribution. The results show that with the same meta re-weighting algorithm, our methods outperform other sample selection strategies in most cases.




\section{Related Work}\label{sec: related_work}

\paragraph{Sample re-weighting}
The problem of re-weighting training samples for a neural network model has been extensively studied in the literature. Sample re-weighting can be beneficial for constructing robust neural network models in the presence of many defects in training data, such as corrupted labels \cite{han2018co, ren2018learning, shu2019meta}, biased distributions \cite{khan2017cost, dong2017class}, low cardinalities \cite{hu2019learning} and adversarial attacks \cite{holtz2021learning}. Other than solving this problem within the meta-learning framework (e.g., \cite{shu2019meta, ren2018learning, hu2019learning}), various strategies have been proposed for deriving sample weights. For example, in \cite{wang2017robust}, the sample weights are modeled as a Bayesian latent variable and inferred through probabilistic models. In \cite{jiang2018mentornet}, a mentor network is designed to derive the sample weights such that the target model does not overfit on samples with noisy labels, which falls within the curriculum learning \cite{bengio2009curriculum} framework. In \cite{kumar2010self}, the weights of training samples are determined by their training loss during the training process. However, as \cite{shu2019meta} suggests, these re-weighting techniques all perform worse than the meta re-weighting algorithm in the presence of label noise and distribution imbalance in training data. 

\paragraph{Data efficiency}
As mentioned in Section \nameref{sec: intro}, it is critical to obtain large amounts of high-quality training samples for deep neural nets. However, this can be expensive and time consuming since labeling typically requires non-trivial work from human annotators, especially in scientific domains (see e.g, \cite{karimi2020deep, irvin2019chexpert}). High labeling cost is thus a strong motivator for studies on various label efficiency techniques, e.g., active learning (see a survey in \cite{ren2021survey} and some recent works \cite{mirzasoleiman2020coresets}), semi-supervised learning (see \cite{van2020survey}), and weakly-supervised learning (see Snorkel \cite{ratner2017snorkel}) in the past few years. All of these studies aim at minimizing human labeling effort while maintaining relatively high model performance. Note that for the meta re-weighting problem, the construction of perfect meta samples also requires human labeling effort when label noise exists. Therefore, our framework shares the same spirit as the traditional label efficiency research. 


\paragraph{Data valuation}
\revise{In the literature, other than active learning, there exists many techniques to quantify the importance of individual samples, e.g., influence function \cite{koh2017understanding} and its variants \cite{wu2021chef}, Glister \cite{killamsetty2021glister}, HOST-CP \cite{das2021finding}, TracIn \cite{pruthi2020estimating}, DVRL \cite{yoon2020data} and Data Shapley value \cite{ghorbani2019data}. However, among these methods, Data Shapley value \cite{ghorbani2019data} is very computationally expensive while others rely on the assumption that a set of ``clean'' validation samples (or meta samples) are given, which is thus not suitable for our framework (we have more detailed discussions on Data Shapley value and its extensions in Appendix \nameref{appendix: related_work}). We therefore do not include these solutions as baseline methods. }


\section{Background: the meta re-weighting algorithm}\label{sec: background}
In this section, we present some necessary details on the meta re-weighting algorithm from \cite{shu2019meta}. 

Suppose the meta re-weighting algorithm is conducted on a large imperfect training set, $\Dtrain=\{(\x_j,\y_j)\}_{j=1}^N$ and a small perfect meta set $\Dmeta=\{(\mx{i}, \my{i})\}_{i=1}^M$. Imagine that we want to learn a model parameterized by $\Theta$, and the loss evaluated on a training sample $(\x_j,\y_j)$ and a meta sample $(\mx{i}, \my{i})$ is denoted as $\f{j}{\Theta}$ and $\mf{i}{\Theta}$ respectively. We further denote the weight of each training sample $j$ as $\w_j$ (between 0 and 1). Following \cite{shu2019meta}, the meta re-weighting algorithm jointly learns the weights $\bw = \{\w_j\}_{j=1}^N$ and the model parameter $\Theta$ by solving the following bi-level optimization problem:
\begin{small}
\begin{align}\label{eq: meta_reweight_goal}
    \begin{split}
    &\min_{\bw} \frac{1}{M} \sum\nolimits_{i=1}^M \mf{i}{\Theta^*(\bw)},\\
    & \text{s.t.}\ \Theta^*(\bw) = \text{argmin}_{\Theta} \frac{1}{N} \sum\nolimits_{j=1}^N\w_j \f{j}{\Theta},
    \end{split}
\end{align}
\end{small}
in which $\Theta^*(\bw)$ denotes the learned model parameters on the training set weighted by $\bw$. This problem can be efficiently solved by the meta re-weighting algorithm proposed by \cite{shu2019meta}, which can be abstracted with the following formulas \footnote{Note that these formulas are slightly different from the ones in \cite{shu2019meta} since the sample weights in \cite{shu2019meta} are produced by another neural net. But its learning algorithm is also applicable to the case where the sample weights are updated directly. We therefore start from this simple case. \revise{Further note that \cite{ren2018learning} and \cite{hu2019learning} solve Equation \eqref{eq: meta_reweight_goal} in a similar manner. Therefore, although we develop our methods mostly based on \cite{shu2019meta}, 
they are also potentially applicable to the solutions in \cite{ren2018learning} and \cite{hu2019learning}. We therefore discuss how it can be extended to \cite{ren2018learning}, in Appendix \nameref{appendix: generalization_method}}}:
\begin{small}
\begin{empheq}[box=\widefbox]{align}
    & \textbf{Meta re-weighting:} \nonumber \\
    &\hat{\Theta}(\bw_t) = \Theta_t - \frac{\alpha_t}{N}\sum\nolimits_{j=1}^N\w_{j,t} \nabla_{\Theta} \f{j}{\Theta}|_{\Theta = \Theta_t}
    \label{eq: model_pre_update}\\
    & \w_{j,t+1} = \w_{j,t} - \frac{\eta_t}{M}\sum\nolimits_{i =1}^M \nabla_{\w_j} \mf{i}{\hat{\Theta}(\bw_t)} |_{\bw = \bw_t}
    \label{eq: sample_weights_update}\\
    & \Theta_{t+1} = \Theta_t - \frac{\alpha_t}{N} \cdot \sum\nolimits_{j=1}^N \w_{j,t+1}\nabla_{\Theta} \f{j}{\Theta}|_{\Theta = \Theta_t}
    \label{eq: model_post_update}
\end{empheq}
\end{small}
The above formulas show how to update the model parameter and sample weights at the $t^{th}$ iteration. Among these formulas, Equation \eqref{eq: model_pre_update} tries to update the model parameter $\Theta_t$ given the current sample weights $\bw_t=[\w_{1,t},\w_{2,t},\dots,\w_{N,t}]$, which is then employed for updating the sample weights in Equation \eqref{eq: sample_weights_update}. Afterwards, in Equation \eqref{eq: model_post_update}, the updated sample weights, $\bw_{t+1}$, are inserted into Equation \eqref{eq: model_pre_update} to obtain the model parameters for the next iteration, i.e., $\Theta_{t+1}$. This process is then repeated until the convergence. 
\section*{Method}\label{sec: method}


Unlike \cite{shu2019meta, hu2019learning, ren2018learning} where the meta set $\Dmeta$ is assumed to be given, our goal is to select this set from $\Dtrain$. Once this meta set is selected and possibly cleaned by humans (when noisy labels exist), the meta re-weighting algorithm can be used. We hope that the resulting model performance is optimized with respect to the sample selection strategy. We observe that one critical property of such $\Dmeta$ is that it needs to produce \textit{``significant'' cumulative gradient updates (rather than near-zero gradient) in Equation \eqref{eq: sample_weights_update} for every training sample $j (=1,2,\dots,N)$ and every iteration $t$ in the meta re-weighting algorithm}. This can thus guarantee that good training samples are efficiently up-weighted while bad training samples are efficiently down-weighted. 
Therefore, our goal is to maximize the magnitude of the sum of the gradient in Equation \eqref{eq: sample_weights_update} evaluated for each training sample $j$, across all iterations:
\begin{small}
\begin{align}\label{eq: meta_reweighting_goal}
\begin{split}
    & \max_{\Dmeta}\left|\sum\nolimits_{\hat{\Theta}(\bw_t)} 1/M\cdot\sum\nolimits_{i=1}^M \nabla_{\w_j} \mf{i}{\hat{\Theta}(\bw_t)}\right|,\\
    &\text{for all $j=(1,2,\dots,N)$},
\end{split}
\end{align}
\end{small}

which we rewrite as follows according to \cite{shu2019meta} (the constant coefficients are ignored below):

\begin{small}
\begin{align}\label{eq: gradient_inner_prod}
\begin{split}
&  \max_{\Dmeta}\left|\sum_{\hat{\Theta}(\bw_t), \Theta_t}\sum_{i=1}^M \langle\nabla_{\Theta} \mf{i}{\Theta}|_{\Theta = \hat{\Theta}(\bw_t)}, \nabla_{\Theta} \f{j}{\Theta}|_{\Theta = \Theta_t}\rangle\right|,
\end{split}
\end{align}
\end{small}
which thus
represents \textit{the Frobenius inner product of the gradient of the loss} between the meta sample $i$ and the training sample $j$.
If the above inner product is large enough, the weight of this sample will be significantly updated. Since we want to maximize the updates of the weight of each training sample, we sum up the above formula over all training samples, leading to:
\begin{small}
\begin{align*}
\begin{split}
    & \max_{\Dmeta}\sum_{j=1}^N{\left|\sum_{\hat{\Theta}(\bw_t), \Theta_t}\sum_{i=1}^M \langle\nabla_{\Theta} \mf{i}{\Theta}|_{\Theta = \hat{\Theta}(\bw_t)}, \nabla_{\Theta} \f{j}{\Theta}|_{\Theta = \Theta_t}\rangle\right|},
\end{split}
\end{align*}
\end{small}

which can be further approximated as follows by leveraging the fact that $\hat{\Theta}(\bw_t)$, 
is very close to $\Theta_t$:
\begin{small}
\begin{align}\label{eq: our_init_objective0}
    \begin{split}
    \max_{\Dmeta}\sum_{j=1}^N{\left|\sum_{i=1}^M \sum_{\Theta_t}\textcolor{blue}{\langle\nabla_{\Theta} \mf{i}{\Theta}|_{\Theta =\Theta_t}, \nabla_{\Theta} \f{j}{\Theta}|_{\Theta = \Theta_t}\rangle}\right|}
\end{split}
\end{align}
\end{small}

Equation \eqref{eq: our_init_objective0} can be further rewritten as the following Meta-Sample Search Objective (MSSO):
\begin{small}
\begin{align}\label{eq: our_init_objective}
\begin{split}
    & \text{MSSO} := \text{Equation }\eqref{eq: our_init_objective0} = \max_{\Dmeta}\sum_{j=1}^N \left|\sum_{i=1}^M \langle\concatvec{j}, \concatvec{\text{meta}, i}\rangle\right|,
\end{split}
\end{align}
\end{small}
in which, we define
$\concatvec{j} = \left[\concatvec{j}^{(1)}, \dots,\concatvec{j}^{(t)},\dots\right]$ and
$\concatvec{\text{meta},i} = \left[\concatvec{\text{meta},i}^{(1)}, \dots,\concatvec{\text{meta},i}^{(t)},\dots\right]$
as block matrices formed by concatenating the gradients,
$\concatvec{\text{meta},i}^{(t)} := \nabla_{\Theta} \mf{i}{\Theta}|_{\Theta = \Theta_t}$ and
$\concatvec{j}^{(t)} := \nabla_{\Theta} \f{j}{\Theta}|_{\Theta = \Theta_t}$,
from each iteration into one matrix.

Note that the meta sample set, $\Dmeta$, needs to be selected from the training set, $\Dtrain$. Thus, explicitly solving MSSO
is computationally intractable since there are ${N\choose M}$ possible selections of a meta set of size $M$.
In what follows, we present an approximation to MSSO with rigorous guarantees, \revise{which can be effectively solved with a weighted K-means clustering algorithm.}

\subsection*{Approximating MSSO}
We 
show that with reasonable assumptions, solving MSSO is approximately equivalent to searching for a set of cluster centroids, $\mathcal{C} = \{\centroid_i\}_{i=1}^M$, i.e.,:
\begin{small}
\begin{align*}
\begin{split}
    &\text{MSSO} 
    \approx \max_{\mathcal{C}}\sum\nolimits_{j=1}^N\left| \sum\nolimits_{i=1}^M \langle\concatvec{j}, \centroid_i\rangle\right|\\
\end{split}
\end{align*}
\end{small}

which can be approximated by solving the following $M$-clustering objective (MCO)
\begin{small}
\begin{align}\label{eq: transformed_k_means_objective}
\begin{split}
    &\text{MSSO} 
    \approx \text{MCO}:= \max_{\mathcal{C}}\sum\nolimits_{j=1}^N \sum\nolimits_{i=1}^M \left|\langle\concatvec{j}, \centroid_i\rangle\right|,
\end{split}
\end{align}
\end{small}

where MSSO is approximated by moving the absolute value to the inside of the sum.
The approximation above can be justified by the following Theorem.

\begin{theorem}\label{theorem: main}
Suppose that for each sample $i$, the positive terms in the innermost sum of Equation \eqref{eq: our_init_objective} are dominant over the negative terms or vice versa, i.e.:
\begin{small}
\begin{align*}
& \frac{|\sum_{\langle\concatvec{j}, \centroid_i\rangle > 0}\langle\concatvec{j}, \centroid_i\rangle|}{|\sum_{\langle\concatvec{j}, \centroid_i\rangle < 0}\langle\concatvec{j}, \centroid_i\rangle|} > D \gg 1, \\
&\text{or } \frac{|\sum_{\langle\concatvec{j}, \centroid_i\rangle < 0}\langle\concatvec{j}, \centroid_i\rangle|}{|\sum_{\langle\concatvec{j}, \centroid_i\rangle > 0}\langle\concatvec{j}, \centroid_i\rangle|} > D \gg 1, \text{for all $i$},
\end{align*}
\end{small}
then solving MCO is a $\frac{D-1}{D+1}$-approximation to solving MSSO, i.e., $\frac{D-1}{D+1} \leq \frac{MSSO}{MCO} \leq 1$

\end{theorem}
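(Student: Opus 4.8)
The plan is to reduce the statement to an elementary one-variable inequality applied term by term, and then combine it with the standard ``evaluate both objectives at both optima'' argument for approximation ratios. Fix an arbitrary candidate centroid set $\mathcal{C}=\{\centroid_i\}_{i=1}^M$ and write $\text{MSSO}(\mathcal{C}) = \sum_{j=1}^N |\sum_{i=1}^M \langle\concatvec{j},\centroid_i\rangle|$ and $\text{MCO}(\mathcal{C}) = \sum_{j=1}^N \sum_{i=1}^M |\langle\concatvec{j},\centroid_i\rangle|$ for the objective values before maximization, so that $\text{MSSO}=\max_\mathcal{C}\text{MSSO}(\mathcal{C})$ and $\text{MCO}=\max_\mathcal{C}\text{MCO}(\mathcal{C})$. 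For each $j$, partition $\{1,\dots,M\}$ by the sign of $\langle\concatvec{j},\centroid_i\rangle$ and set $p_j = \sum_{i:\langle\concatvec{j},\centroid_i\rangle>0}\langle\concatvec{j},\centroid_i\rangle \ge 0$ and $n_j = -\sum_{i:\langle\concatvec{j},\centroid_i\rangle<0}\langle\concatvec{j},\centroid_i\rangle \ge 0$. Then the $j$-th summand of $\text{MSSO}(\mathcal{C})$ equals $|p_j-n_j|$ and the $j$-th summand of $\text{MCO}(\mathcal{C})$ equals $p_j+n_j$.

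Next comes the key termwise bound. The triangle inequality gives $|p_j-n_j|\le p_j+n_j$ unconditionally. For the reverse direction, the hypothesis says that for each index either $p_j/n_j>D$ or $n_j/p_j>D$; by the symmetry of $|p_j-n_j|$ in $p_j,n_j$, assume $p_j>n_j$ and put $t = n_j/p_j \in [0,1/D)$. Then $\frac{|p_j-n_j|}{p_j+n_j}=\frac{1-t}{1+t}$, which is strictly decreasing in $t$ on $[0,1)$, hence at least its value at $t=1/D$, namely $\frac{D-1}{D+1}$ (the degenerate case $p_j=n_j=0$ contributes $0$ to both objectives and is discarded). Summing both termwise inequalities over $j$ yields, for every $\mathcal{C}$,
\[
\frac{D-1}{D+1}\,\text{MCO}(\mathcal{C}) \;\le\; \text{MSSO}(\mathcal{C}) \;\le\; \text{MCO}(\mathcal{C}).
\]

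Finally I pass to the maxima. Let $\mathcal{C}_1$ attain $\text{MSSO}$ and $\mathcal{C}_2$ attain $\text{MCO}$. For the upper bound, $\text{MSSO} = \text{MSSO}(\mathcal{C}_1) \le \text{MCO}(\mathcal{C}_1) \le \text{MCO}(\mathcal{C}_2) = \text{MCO}$. For the lower bound, $\text{MSSO} = \text{MSSO}(\mathcal{C}_1) \ge \text{MSSO}(\mathcal{C}_2) \ge \frac{D-1}{D+1}\,\text{MCO}(\mathcal{C}_2) = \frac{D-1}{D+1}\,\text{MCO}$. Dividing by $\text{MCO}$ (positive in any nontrivial instance) gives $\frac{D-1}{D+1} \le \frac{\text{MSSO}}{\text{MCO}} \le 1$.

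The computation itself is routine, so the only real care is in the quantification of the hypothesis: the lower-bound half of the max argument applies the termwise inequality at the MCO-optimal centroids $\mathcal{C}_2$, so the dominance condition must be read as holding at that configuration — more safely, at every candidate centroid set the clustering procedure may return — rather than at one fixed $\Dmeta$; I would state this explicitly. The secondary point is to make sure both regimes of the hypothesis (the positive part dominant versus the negative part dominant) are treated uniformly, which is exactly what the symmetry $t \leftrightarrow 1/t$ in the quantity $|p_j-n_j|/(p_j+n_j)$ accomplishes.
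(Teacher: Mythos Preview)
Your proof is correct and follows essentially the same termwise argument as the paper: split each inner sum into its positive and negative parts $p_j,n_j$ and bound $|p_j-n_j|/(p_j+n_j)\ge (D-1)/(D+1)$ via the monotonicity of $(1-t)/(1+t)$. You are in fact more careful than the paper on two points the paper glosses over: it stops at the termwise ratio and never explicitly passes to the maxima over $\mathcal{C}$, whereas you supply the standard evaluate-at-both-optima argument; and your observation that the dominance hypothesis must hold at the MCO-optimal centroids $\mathcal{C}_2$ (not merely at some fixed $\Dmeta$) is a genuine clarification that the paper's statement and proof leave implicit.
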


The proof is included in Appendix \nameref{sec: theorem_main}. Intuitively, we can see that our approximation is perfect when each inner product in \eqref{eq: our_init_objective} is positive, and we have less of a guarantee of the effectiveness when a cluster is less homogeneous in the sign of the inner products between its members and centroid. Indeed, we found that the assumptions in the above theorem hold in most cases (see Appendix \nameref{sec: supple_exp}).
Therefore, due to the closeness of MSSO and MCO, we focus on solving MCO rather than MSSO.

\begin{figure*}
	\centering
	\includegraphics[
	width=0.9\textwidth
	]{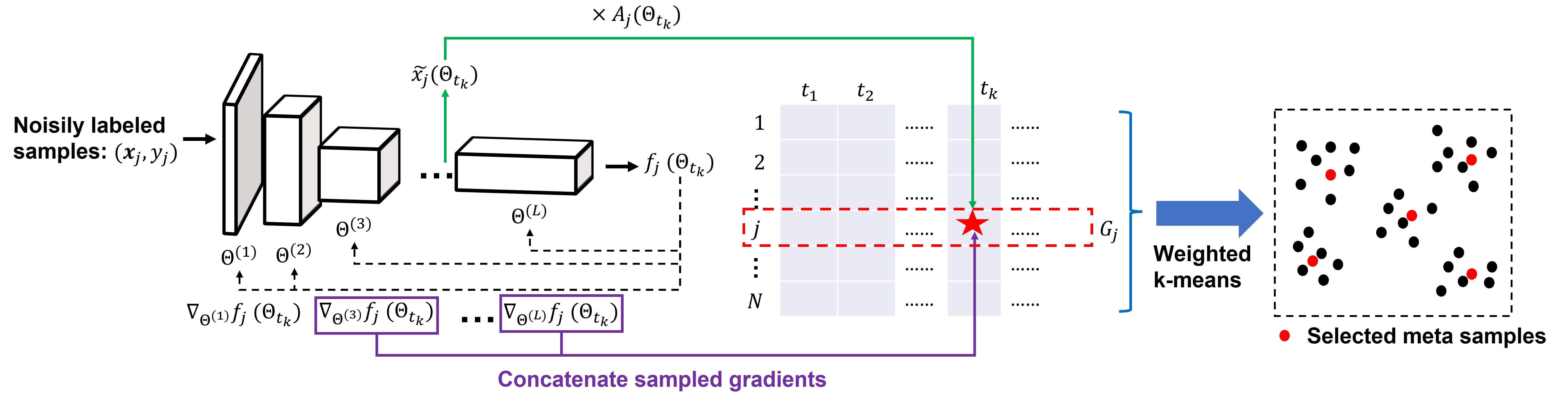}
	\captionof{figure}{Overview of our methods, \ourmethodone\ and \ourmethodtwo. We use \textcolor{methodonecolor}{$\rightarrow$} (green colored arrow) and \textcolor{methodtwocolor}{$\rightarrow$} (purple colored arrow) to denote the data flow of \ourmethodone\ and \ourmethodtwo\ respectively. Specifically, at each sampled time step $t_k$, for each input training sample $(\x_j,\y_j)$, \ourmethodone\ combines its feature vector from the input to the last layer of the model, $\Tilde{\x_j}$, and the coefficient, $\A_{j}(\Theta_{t_k})$ (defined in Equation \eqref{eq: method_one_coeff0}) while \ourmethodtwo\ concatenate the gradients from the sampled layers in the model. We then concatenate the above calculated results from all the time steps $t_1,t_2,\dots$ to compose the input to weighted K-means clustering algorithm, $G_j$ (see the red dotted box), which is then used for determining the meta samples.}\label{fig: method_overview}
\end{figure*}

\subsection*{Solving MCO}\label{sec: solve_mco}

MCO resembles the K-means clustering objective, so it is promising to solve it with the K-means clustering algorithm. As the first step toward this, MCO is transformed to the following form:
\begin{small}
\begin{align}\label{eq: weighted_k_means_obj_0}
    \begin{split}
        \text{MCO} &=\max_{\mathcal{C}}\sum_{j=1}^N \|\concatvec{j}\| \sum_{i=1}^M \|\centroid_i\|\cdot |\text{cosine}(\concatvec{j}, \centroid_i)|,
    \end{split}
\end{align}
\end{small}
which can be regarded as a weighted K-means clustering objective function.
Specifically, 
the norm of each $\centroid_i$ is used for re-weighting the cosine similarity between each training sample $j$ and each cluster centroid $i$, which is followed by re-weighting the overall similarity of each training sample $j$ to \textit{all} cluster centroids with the norm of $\concatvec{j}$.
Further details on how to tailor the vanilla K-means clustering algorithm to solve MCO are presented in Appendix \nameref{appendix: k_means}. 
After $\mathcal{C} = \{\centroid_i\}_{i=1}^M$ is identified by this weighted K-means algorithm, the samples closest to each cluster centroid are returned as the selected meta samples, $\Dmeta$\footnote{We notice that other strategies, e.g., \cite{auvolat2015clustering}, can be employed to solve MCO, which, however\revise{,} do not \revise{perform} well and are thus ignored. }. 
Note that in Equation \eqref{eq: transformed_k_means_objective}, collecting all $\concatvec{j}$ is very expensive.
This is because $j$ is over all training samples which can be very large,
and $\concatvec{j}$ depends on all $\Theta_t$, i.e., the model parameters at all iterations (see Equation \eqref{eq: our_init_objective}). 

To address the above efficiency concerns, we firstly propose two methods, i.e., \ourmethodonefull\ (\ourmethodone) and \ourmethodtwofull\ (\ourmethodtwo) in Section \nameref{sec: method_one} and Section \nameref{sec: method_two} respectively, for addressing the first concern. We further discuss how to sample from all $\Theta_t (t=1,2,\dots)$ in Section \nameref{sec: chicken_egg_problem} to handle the second concern.

\subsubsection{\ourmethodonefull\ (\ourmethodone)}\label{sec: method_one}
\ourmethodone\ is built upon the assumption that 
the gradient of the model parameters on the bottom layers (i.e. those layers closer to the input) is less significant than the ones in the last layer. Due to the vanishing gradient problem, this assumption usually holds in practice. As a consequence, we 
only consider the gradients from the last layer in \text{Equation }\eqref{eq: transformed_k_means_objective}, leading to the following approximations on $\concatvec{j}$:
\begin{small}
\begin{align}\label{eq: method_one_objective_function}
    \begin{split}
     \concatvec{j} = \A_{j}(\Theta_t)\Tilde{\x_j}(\Theta_t)^\top,
\end{split}
\end{align}
\end{small}
in which 
$\Tilde{\x_j}(\Theta_t)$ represents the input to the last linear layer in the neural network model produced by the 
training sample $j$, while $\A_{j}(\Theta_t)$
is defined as follows:
\begin{small}
\begin{align}\label{eq: method_one_coeff0}
    & \A_{j}(\Theta_t) = \text{softmax}(\Theta_t^{(-1)}\Tilde{\x_j}(\Theta_t)) - \text{onehot}(\y_j)
\end{align}
\end{small}
in which $\Theta_t^{(-1)}$ represents the model parameters in the last layer. The detailed derivation of Equation \eqref{eq: method_one_objective_function} is included in Appendix \nameref{appedix:derivation_method_one_objective}. Equation \eqref{eq: method_one_objective_function}-\eqref{eq: method_one_coeff0} shows that to obtain $\concatvec{j}$, 
only forward passes on the models are needed, which makes this method very efficient.


\subsubsection{\ourmethodtwofull\ (\ourmethodtwo)}\label{sec: method_two}
Unlike \ourmethodone, \ourmethodtwo\ is applicable to general cases where the gradients generated by the bottom neural layers may be significant. To facilitate efficient evaluations of MCO, we importance sample the network layers from the model, such that we can obtain an unbiased estimation of Equation \eqref{eq: transformed_k_means_objective}. Then $\concatvec{j}$ is constructed by concatenating the gradients calculated in those sampled layers.



Specifically, first of all, the blue part of Equation \eqref{eq: our_init_objective0} (which is the essential part of Equation \eqref{eq: transformed_k_means_objective}) can be rewritten in terms of a sum over the model parameters at each layer $l \in [1,2,\dots, L]$, i.e.:
\begin{small}
\begin{align}\label{eq: inner_prod_by_layer}
\begin{split}
&\langle\nabla_{\Theta} \mf{i}{\Theta}|_{\Theta = \Theta_t},  \nabla_{\Theta} \f{j}{\Theta}|_{\Theta = \Theta_t}\rangle\\
& = \left[\sum\nolimits_{l=1}^L \langle\nabla_{\Theta^{(l)}} \mf{i}{\Theta}), \nabla_{\Theta^{(l)}} \f{j}{\Theta})\rangle\right]_{\Theta = \Theta_t},
\end{split}    
\end{align}
\end{small}
in which $\Theta^{(l)}$ represents the model parameters at the $l^{th}$ layer. Then the above formula could be rewritten as follows:
\begin{small}
\begin{align}\label{eq: inner_prod_by_layer_by_importance_score}
    \begin{split}
        &\text{Equation }\eqref{eq: inner_prod_by_layer} \\
        & = A\cdot\left[\sum_{l=1}^L \frac{A^{(l)}}{A} \langle\frac{\nabla_{\Theta^{(l)}} \mf{i}{\Theta}}{\sqrt{A^{(l)}}}, \frac{\nabla_{\Theta^{(l)}} \f{j}{\Theta}}{\sqrt{A^{(l)}}}\rangle\right]_{\Theta = \Theta_t},
    \end{split}
\end{align}
\end{small}
in which, $A^{(l)} = \|\frac{1}{N}\sum_{j=1}^N \nabla_{\Theta^{(l)}} \f{j}{\Theta}\|_F^2$ and $A=\sum_{l=1}^L A^{(l)}$

Then we can conduct importance sampling (with replacement) on the $L$ innermost sums in Equation \eqref{eq: inner_prod_by_layer_by_importance_score} for several times (say 5 times)\footnote{we conduct the importance sampling once for all the samples so that the dimension of $\concatvec{j}$ is the same among all the samples. Although it is not rigorously correct, the empirical studies show that this approximation could achieve good performance}, in which the probability of selecting the $l^{th} (l=1,2,\dots,L)$ term is $A^{(l)}/A$. This leads to an unbiased estimation of Equation \eqref{eq: inner_prod_by_layer_by_importance_score} and significant speed-ups. 

\subsubsection{Sampling model parameters from history}\label{sec: chicken_egg_problem}
It is worth noting that $\Theta_t$ is unknown before we obtain all meta samples (see Equation \eqref{eq: model_pre_update}-\eqref{eq: model_post_update}), but it is essential for determining the meta samples (see Equation \eqref{eq: our_init_objective0}). Therefore, we propose to cache the model parameter $\Tilde{\Theta}_t (t=1,\dots, T)$ during the training process without any available meta samples, which is regarded as an approximation of $\Theta_t$. 

In addition, as mentioned above, $\concatvec{j}$ depends on the model parameters from all the time steps, which is thus very expensive to evaluate. We uniformly sample several time steps, instead of using all $\Tilde{\Theta}_t$, to get an unbiased estimation of MCO.

In the end, we visually present both \ourmethodone\ and \ourmethodtwo\ equipped with this sampling technique in Figure \ref{fig: method_overview} and include their pseudo-code in Algorithm \ref{alg: rbc_gbc} in Appendix \nameref{appendix: adapted_k_means}. 

\section{Applications}
We demonstrate the effectiveness of our methods for two applications, i.e., re-weighting a training set with noisy labels and re-weighting an imbalanced training set. In what follows, we discussed how to tailor \ourmethodone\ and \ourmethodtwo\ to these two applications.

\subsection{Re-weighting a training set with noisy labels}\label{sec: application_label_noise}
To re-weight a noisily labeled training set, we can select a subset of meta samples from the training set and obtain their clean labels from human annotators. 
Note that for \ourmethodone\ and \ourmethodtwo, 
the evaluation of the gradients depend on the clean labels of the meta samples while these clean labels are obtained from human annotators {\em after} \ourmethodone\ or \ourmethodtwo\ is invoked. To address this chicken or the egg issue, we observe that if the loss function is the cross-entropy loss, then the sample-wise gradient, $\nabla_{\Theta} \f{j}{\Theta}$, can be broken into two parts, the {\em label-free part} and the {\em label-dependent part}.
Due to the unavailability of the clean labels, we therefore only leverage the \textit{label-free part} as the input to \ourmethodone\ and \ourmethodtwo.

Although we only use the label-free part, in Appendix \nameref{appendix: label_free_gradient} (see Theorem \ref{theorem: approximate_label_free}), we theoretically analyze under what conditions the label-dependent part is insignificant to determining which cluster each training sample belongs to after the weighted k-means clustering algorithm is invoked.
Those conditions are satisfied by a large portion of the training samples through our empirical studies (see Appendix \nameref{sec: supple_exp}), thus justifying the effectiveness of discarding the label-dependent part.

\subsection{Re-weighting a training set with a class-imbalance}
As indicated by \cite{shu2019meta}, the meta re-weighting algorithm can also be leveraged for re-weighting class-imbalanced training sets. 
Unlike the case where the labels are noisy, we assume clean labels in the class-imbalanced training set. As a consequence, we evaluate the sample-wise gradient $\nabla_{\Theta} \f{j}{\Theta}$ as a whole rather than removing the label-dependent part from it.




\section{Experiments}\label{sec: experiment}

We demonstrate the effectiveness of our methods for training deep neural nets on image classification datasets, \mnist\ \cite{deng2012mnist}, \cifarone\ \cite{krizhevsky2009learning} and \cifartwo\ \cite{krizhevsky2009learning}, and \revise{\imagenet\ \cite{russakovsky2015imagenet}\footnote{\imagenet\ is a subset of ImageNet and produced by following \cite{li2021contrastive}}}.
By following \cite{shu2019meta} and \cite{ren2018learning}, we consider the occurrence of noisy labels and class imbalance respectively on the training set. All the code is publicly available\footnote{\url{https://github.com/thuwuyinjun/meta_sample_selections}}.

\begin{table*}[t]
\caption{Test accuracy on \mnist, \cifarone\ and \cifartwo\ dataset with noise rate 60\%}\label{table: noise_label}
\small
\centering

\begin{tabular}[!h]{>{\arraybackslash}p{1.5cm}|>{\centering\arraybackslash}p{1.5cm}|>{\centering\arraybackslash}p{1.35cm}>{\centering\arraybackslash}p{1.5cm}|>{\centering\arraybackslash}p{1.35cm}>{\centering\arraybackslash}p{1.4cm}|>{\centering\arraybackslash}p{0.8cm}>{\centering\arraybackslash}p{1.3cm}} \hline\hline
 Dataset &\multicolumn{1}{c|}{\mnist}&\multicolumn{2}{c|}{\cifarone}&\multicolumn{2}{c}{\cifartwo}&\multicolumn{2}{|c}{\makecell{\imagenet}} \\ \hline
Noise type &adversarial&uniform&adversarial&uniform&adversarial&uniform&adversarial\\ \hline
Base model & 51.74$\pm$1.52 & 77.74$\pm$1.22 & 40.24$\pm$0.39 &43.63$\pm$2.30 & 27.15$\pm$0.40 & 72.22&38.00 \\
\random &85.67$\pm$0.90 & 73.56$\pm$0.40 & 76.02$\pm$2.01 & 42.30$\pm$4.68 &  45.33$\pm$1.70 &93.33 & 59.77\\
\certain &81.84$\pm$0.89 & 74.76$\pm$1.07 &70.78$\pm$5.00 & 45.95$\pm$4.20 & 47.06$\pm$2.10 &91.20 &58.22\\
\uncertain &76.38$\pm$0.54 & 73.83$\pm$0.24 & 74.45$\pm$6.10& 36.67$\pm$0.20 & 44.65$\pm$0.65 &85.22 & 51.00\\
\finetune & 53.39$\pm$1.22& 78.46$\pm$2.10 &23.07$\pm$7.58 &25.28$\pm$1.13 & 24.88$\pm$1.10 & 70.44&35.67\\
\ta &79.31$\pm$0.23 &72.89$\pm$0.82 & 61.46$\pm$4.65&31.07$\pm$2.56&38.79$\pm$0.86&86.34 &43.26 \\
\craige & 92.84$\pm$0.14&78.77$\pm$0.86 &78.55$\pm$1.03 &39.85$\pm$1.23&44.61$\pm$1.21&88.90 & 61.00\\
\ourmethodone-K &\underline{93.78$\pm$0.61}&77.91$\pm$1.43&75.71$\pm$1.22&49.32$\pm$0.35&49.51$\pm$0.43& 91.33&54.67 \\ \hline
\ourmethodone &93.00$\pm$1.01 & \underline{80.15$\pm$0.25} &\underline{79.20$\pm$0.64} & \underline{49.56$\pm$0.53} & \underline{50.60$\pm$1.51}&\textbf{94.22}&\textbf{63.67} \\
\ourmethodtwo &\textbf{94.26$\pm$0.24} &\textbf{80.36$\pm$0.96}&\textbf{80.88$\pm$1.46}& \textbf{50.88$\pm$1.90} & \textbf{53.14$\pm$1.33}&\underline{94.00}&\textbf{63.67} \\ \hline\hline
\end{tabular}
\end{table*}

\subsection{Experimental set-up}\label{sec: experiment_setup}
For the \mnist\ dataset, we train a LeNet model \cite{lecun1998gradient} and for \cifarone, \cifartwo\ and  \imagenet\ dataset, we train a ResNet-34 model \cite{he2016deep}. 
All the hyper-paremters are reported in Appendix \nameref{sec: supple_exp}.
\begin{table}[t]
		\centering
		\begin{small}
\begin{tabular}[!h]{>{\arraybackslash}p{1.5cm}|>{\centering\arraybackslash}p{1.6cm}>{\centering\arraybackslash}p{1.6cm}} \hline\hline
 Dataset &\multicolumn{1}{c}{\cifarone}&\multicolumn{1}{c}{\cifartwo} \\ \hline
BaseModel  & 61.45$\pm$0.60& 28.14$\pm$0.57 \\
\random    & 65.96$\pm$1.74 & 29.29$\pm$0.46 \\
\uncertain & 64.46$\pm$1.20 & 28.39$\pm$0.21 \\
\certain  & 66.05$\pm$1.19 & 28.52$\pm$0.15  \\
\finetune\ & 60.04$\pm$1.69 & 29.73$\pm$0.06\\
\ta & 61.58$\pm$1.21  & 30.89$\pm$1.09 \\
\craige &66.60$\pm$0.89 & 29.56$\pm$1.46\\
\ourmethodone-K& 65.96$\pm$1.02 &30.77$\pm$1.23 \\ \hline
\ourmethodone\ & \textbf{68.18$\pm$1.58} & \underline{31.78$\pm$1.10} \\
\ourmethodtwo\ & \underline{67.37$\pm$1.51} & \textbf{33.87$\pm$0.66} \\\hline\hline
\end{tabular}
\hfill
\caption{Test performance on imbalanced \cifarone\ and \cifartwo\ dataset with imbalanced factor 200}\label{Table: cifar_imb}
\end{small}
\end{table}

\begin{figure}
		\centering
		\includegraphics[
		width=0.35\textwidth
		]{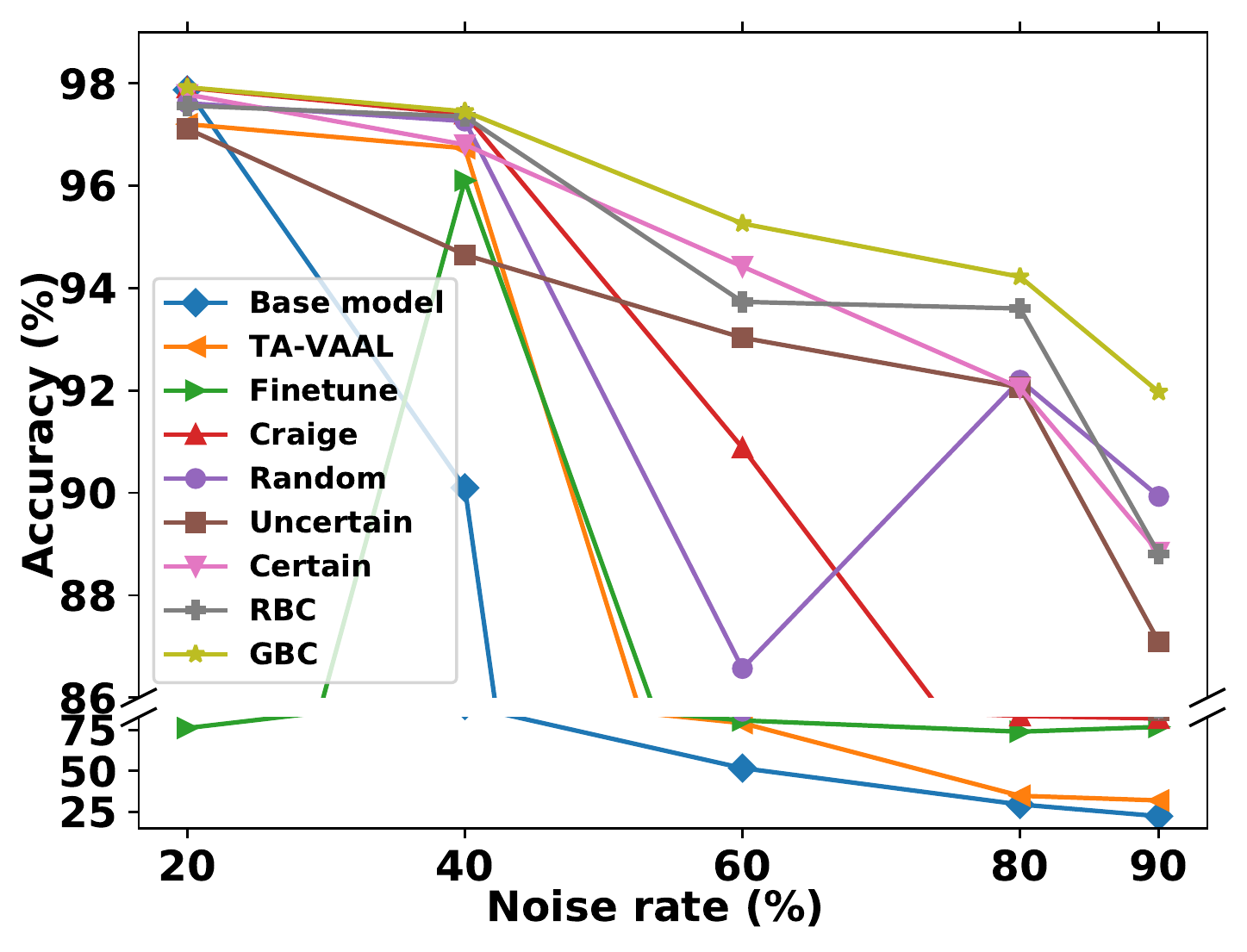}
		\captionof{figure}{Test performance on \mnist\ dataset with varied noisy rate}\label{fig: mnist_vary_noise}
\end{figure}

\subsection{Noisy label experiments}

We first study how our methods perform in the presence of two types of label noise, i.e., {\em uniform noise} and {\em adversarial noise}: 
\begin{itemize}[leftmargin=*, noitemsep,topsep=0pt]
\item {\em uniform noise}: all labels can be uniformly flipped at random to any other label with probability $p/100$, in which $p$ is a percent specified by users. This has been explored in \cite{shu2019meta}  and \cite{ren2018learning};
    \item {\em adversarial noise}:
    the labels for a subset of samples, chosen at random, are determinisitcally mapped to another label (e.g., selected samples with label 0 are all given label 1).
    This is meant to simulate an extreme case where the labels are {\em adversarially} flipped and has been explored in some prior works (e.g., \cite{li2022selective})
\end{itemize}

We present the results with one fixed noise rate, $p = 60$, for both types of noise on \mnist, \cifar\ and \imagenet\ and \revise{the effect of varied $p$ is also explored on the \mnist\ dataset (see Figure \ref{fig: mnist_vary_noise})}. Surprisingly, we found that 60\% uniform noise only reduces the model accuracy on \mnist\ by a few percent. We therefore only report the results on \mnist\ with adversarial noise. 
Throughout this experiment, we compare \ourmethodone\ and \ourmethodtwo\ against the following baseline methods:
\begin{itemize}[leftmargin=*, noitemsep,topsep=0pt]
    \item \textbf{Random selection} (\textit{\random}): We uniformly at random select meta samples from the training set;
    \item \textbf{\finetune}: We fine-tune the model using only the selected meta samples, selected by \random;
    \item \textbf{Active learning}: We select meta samples using 1) \textbf{Uncertainty based selection} (\textit{\uncertain}) \cite{lewis1994sequential} 
    by selecting the {\em most uncertain} training samples, 2) \textbf{Certainty based selection} (\textit{\certain}) 
    by selecting the {\em most certain} training samples and 3) two state-of-the-art active learning solutions, \textbf{Task-Aware Variational Adversarial Active Learning}  (\textit{\ta}) \cite{kim2021task} and \textbf{\craige} (\textit{\craige}) \cite{mirzasoleiman2020coresets}
    \item \textbf{\ourmethodone-k}: We use the original K-means clustering algorithm rather than the weighted version proposed in Section \nameref{sec: solve_mco} to determine the meta samples in \ourmethodone.
\end{itemize}

Note that for both of our methods and the above baseline methods, the labels of the selected meta samples are cleaned by human annotators, which is simulated by replacing their noisy labels with ground-truth labels. This thus justifies the use of the perfectly labeled benchmark datasets (rather than real datasets with unreliable labels). As a result, for fair comparison, our methods and the above baseline methods share the same labeling budget, which is set as 20, 50, 200 and 50 for \mnist, \cifarone, \cifartwo\ and \imagenet\ respectively (\revise{which includes the labeled samples in the pre-training phase}). We pre-train the models by running the meta re-weighting algorithm with small amount of randomly selected meta samples (10 for \mnist, 10 for \cifarone, 50 for \cifartwo\ and 10 for \imagenet) since selecting those meta samples one time leads to sub-optimal performance\footnote{We empirically show this in Appendix \nameref{sec: supple_exp}}, \revise{which is applied to all the baseline methods for fair comparison}. 
\begin{table}
\caption{The AUC score of the sample weights on \mnist\ with noise rate 80\%}\label{Table: mnist_sample_weights}
\small
\centering
\begin{tabular}[!h]{>{\arraybackslash}p{1cm}|>{\centering\arraybackslash}p{1cm}>{\centering\arraybackslash}p{1cm}} \hline\hline
Method &All& Boundary \\ \hline
\random  & 0.922   & 0.589 \\
\ourmethodone\ & \textbf{0.958} &0.775 \\
\ourmethodtwo\ & 0.949 & \textbf{0.854}\\\hline\hline
\end{tabular}
\end{table}
\paragraph{Overall performance} We present the test accuracy in Table \ref{table: noise_label}\footnote{We report the validation accuracy for \imagenet\ since the ground-truth labels of test samples are invisible} after running the meta re-weighting algorithm with meta samples selected by different methods. As indicated by this table, the clustering-based methods, \ourmethodone-k, \ourmethodone\ and \ourmethodtwo\ can significantly outperform other methods in most cases and the performance gains are up to~6\% (see the performance difference between \ourmethodtwo\ and \certain\ in column ``adversarial'' of \cifartwo\ dataset). 
\revise{Furthermore, \ourmethodone\ consistently outperforms \ourmethodone-K, which suggests the weighted K-means algorithm is capable of identifying a better set of meta samples than the original K-means algorithm.}

\paragraph{Efficiency of \ourmethodone} \revise{We also observe a trade-off between performance and speed when comparing \ourmethodtwo\ and \ourmethodone.  According to Table \ref{table: noise_label}, \ourmethodtwo\ performs better than \ourmethodone\ in most cases while the former is slower than the latter (2.5 hours VS 3 mins) to construct $\concatvec{j}$. \revise{Note that the running time of \ourmethodone\ is negligible in comparison to the running time of the meta re-weighting algorithm, which is around 4 mins per epoch and there are hundreds of epochs in total.}}

\paragraph{Robustness against varied noise rate} \revise{As indicated by Figure \ref{fig: mnist_vary_noise}, both \ourmethodone\ and \ourmethodtwo\ outperform all the baseline methods across all the noise rates and the performance gains become even larger with more samples being noisily labeled (up to 2\%). This indicates the robustness of our methods against a varied level of label noise.}

\paragraph{Sample weight distributions} \revise{Recall that our methods depend on the assumption that larger updates to the sample weights will more effectively result in the weights of the noisy and clean samples approaching 0 and 1 respectively. We therefore empirically verify this assumption by inspecting the sample weights learned by \random, \ourmethodone\ and \ourmethodtwo. Specifically, we calculate the AUC between the learned sample weights and the cleanness of the sample labels (1 for clean while 0 for corrupt). We report this quantity for \mnist\ with 80\% noisy labels in Table \ref{Table: mnist_sample_weights} for the entire training set and for the 1000 samples nearest to the decision boundary\footnote{We measure the distance between each sample and the decision boundary by utilizing the metric proposed by \cite{elsayed2018large}}. As Table \ref{Table: mnist_sample_weights} shows, the AUC of \ourmethodone\ and \ourmethodtwo\ are significantly higher than that of \random, especially for those samples near the boundary, thus suggesting the capability for \ourmethodone\ and \ourmethodtwo\ to better distinguish between clean and noisy samples. This could thus explain why \ourmethodone\ and \ourmethodtwo\ achieve superior performance according to Table \ref{fig: mnist_vary_noise}, thereby verifying our assumption.}


\subsection{Class imbalance experiments}

For evaluating our method on class imbalanced data, we follow \cite{cui2019class} to produce the long-tailed \cifar\ dataset. Specifically, we down-sample some classes so that the ratio between the number of training samples in the largest class and that in the smallest one (which is denoted the {\em imbalance factor}) is large. \revise{In Table \ref{Table: cifar_imb}, we report the results with imbalance factor 200 on \cifarone\ and \cifartwo\ dataset}. 
As shown in Table \ref{Table: cifar_imb}, our method, \ourmethodone, outperforms all the baseline methods for \cifarone\ and \cifartwo\, and the performance gain is up to 3.10\%.










\subsection{Other experimental results} 
\revise{Due to the space limit, all other experimental results are presented in Appendix: \nameref{sec: supple_exp}, including the experiments with real labeling noise on \cifar\ dataset, the effect of the pre-training phase, the effect of varied number of meta samples, the effect of the number of sampled gradients in \ourmethodone\ and \ourmethodtwo\ (recall that both approximate Equation \eqref{eq: weighted_k_means_obj_0} through sampling according to Section: \nameref{sec: solve_mco}), and some qualitative studies.}

\section{Conclusion}
In this work, we propose a clustering-based framework for selecting pivotal samples to improve performance of meta re-weighting in the presence of various defects on training data. Based on our theoretical analysis, we show that selecting pivotal samples can be reduced to a weighted K-means algorithm under reasonable assumptions. To efficiently evaluate this algorithm we propose two methods, \ourmethodone\ and \ourmethodtwo, which can balance the computational efficiency and prediction performance.
Through empirical studies on noisily labeled and class-imbalanced image classification benchmark datasets, we can demonstrate that 
our technique could select a better set of pivotal samples for meta re-weighting algorithm than other sample selection techniques, thereby resulting in better model performance.
\newpage
\section{Acknowledgments}
We thank our anonymous reviewers for valuable feedback. This research was supported by grants from DARPA (\#FA8750-19-2-0201) and NSF award (IIS-2145644).
\bibliography{reference}

\newpage
\clearpage
\onecolumn
\appendix
\def\theequation{S\arabic{equation}}
\setcounter{equation}{0}

\section{Appendix: more related work}\label{appendix: related_work}

\subsection{Extra related work on Shapley-value based data valuation}
Due to the space limit of the main paper, we provide here a more extensive discussion on the existing data valuation literature, in particular, the works based on Data Shapley value \citesupp{ghorbani2019data}. We notice that \citesupp{sim2022data} summarized the recent progress in this area.
However, as noted by \citesupp{sim2022data}, these solutions are not scalable to large datasets due to the high computational overhead of data Shapley. It is worth noting that although these solutions, e.g., \citesupp{xu2021validation, sim2020collaborative, xu2021gradient}, claim that they are applicable to realistic large datasets, they only study the data Shapley value of several ``partitions'' of the entire dataset and the number of partitions is typically smaller than 10. In contrast, to identify meta samples for the meta re-weighting algorithm, we need to compute the value of each individual training sample and thus the number of ``partitions'' is equivalent to the number of training samples, which is typically very large. This thus indicates that all the existing data Shapley dependent solutions are computationally intractable for our problem. Although there have been many recent attempts to approximately but efficiently compute data Shapley values, e.g., \citesupp{jia2019towards, yan2021if, jia12efficient}, they are still far from being practical solutions since they either explicitly assume that the models have certain properties, which may not hold for general neural nets (e.g., \citesupp{jia2019towards, jia12efficient}), or still require repetitive training (e.g., \citesupp{yan2021if}). As a consequence, it is still an open challenge to efficiently compute data Shapley values for \textit{each sample} in large datasets for \textit{general neural nets} \citesupp{sim2022data}.

In addition, note that \citesupp{xu2021validation} proposes a data valuation metric without relying on the performance on the validation set, which, however, is built upon the data Shapley value, thus suffering from the efficiency issue as mentioned above.


\section{Appendix: Extra algorithmic details}
\subsection{Supplemental materials on the weighted K-means algorithm}\label{appendix: k_means}
In section \nameref{sec: solve_mco}, we discussed adapting the vanilla K-means clustering algorithm for solving MCO, which is presented in details in this section.


\begin{algorithm}

    \textbf{Input}: {A set of gradient vectors $\{\concatvec{j}\}_{j=1}^N$}\\
    \textbf{output}: {A set of cluster centroids $\{\centroid_i\}_{i=1}^M$}
    \begin{algorithmic}[1]
    \STATE Randomly initialize $\{\centroid_i\}_{i=1}^M$
    \WHILE{not converged}
    
        \STATE \textbf{Assignment step:} Assign the sample $j$ to the cluster $\hat{i}$ such that $\hat{i} = \text{argmax}_{i}\{\|\centroid_i\|\text{cosine}(\concatvec{j}, \centroid_i)\}_{i=1}^M$.
        
        \STATE \textbf{Update step:} Update the cluster centroid $j$ by Equation \eqref{eq: weighted_k_means_update_step}.
    
    \ENDWHILE
    
\caption{A weighted K-means clustering algorithm for solving MCO}\label{alg: weighted_k_means}
\end{algorithmic}
\end{algorithm}

\begin{algorithm}
    \textbf{Input:}{A training set $\Dtrain=\{(\x_i,\y_i)\}_{i=1}^N$, the total number of training iterations, $T$, the number of the randomly sampled meta samples in the warm-up phase, $M_0$, and a model with model parameter $\Theta$}\\
    \textbf{Output:}{A set of meta samples $\Dmeta$}
    \begin{algorithmic}
    
    \STATE Initialize $\Dmeta=\{\}$
    
    \COMMENT{Warm-up phase}
    \STATE Randomly sample $M_0$ meta samples and add them into $\Dmeta$
    
    \STATE Run meta re-weighting algorithm with $\Dmeta$ as the meta sample set, 
    resulting in a list of model parameters $\{\Tilde{\Theta}_t\}_{t=1}^T$ at each iteration.

    \WHILE{more validation samples are needed}
    
    \STATE Remove those training samples that are close to $\Dmeta$ (the similarity measure is the weighted cosine similarity defined in Equation \eqref{eq: weighted_cosine_sim})
     
    \STATE Repetitively run Algorithm \ref{alg: weighted_k_means} until there is no empty clusters, in which \revise{$\{\concatvec{j}\}_{j=1}^N$ is computed} 
    with \ourmethodone\ or \ourmethodtwo, and parameterized by $\{\Tilde{\Theta}_t\}_{t=1}^T$. Suppose this results in cluster centroids $\{\centroid_i\}_{i=1}^M$
     
    \STATE For each $\centroid_i (i=1,2,\dots, M)$, add the closest training sample to this centroid into $\Dmeta$
    
    \STATE Run meta re-weighting algorithm with $\Dmeta$ as the meta sample set, 
    resulting in a list of model parameters $\{\Theta_t\}_{t=1}^T$ at each iteration.
    
    \STATE $\{\Tilde{\Theta}_t\}_{t=1}^T := \{{\Theta}_t\}_{t=1}^T$
    
    \ENDWHILE

\caption{K-means clustering based meta sample selection}\label{alg: overall_algorithm}
\end{algorithmic}
\end{algorithm}

\begin{algorithm}
\textbf{Input:}{A set of model parameters $\{\Theta_t\}_{t=1}^T$, the epoch $t^*$ with best validation performance}\\
\textbf{Output:}{$\{\concatvec{j}\}_{j=1}^N$}
\begin{algorithmic}


\STATE Uniformly sample $K$ models from $\{\Theta_t\}_{t=t^*+1}^T$, resulting in $\{\Theta_{t_k}\}_{k=1}^K$ where $t^* < t_1 < t_2 < \dots < t_K$

\STATE Initialize $\{\concatvec{j}\}_{j=1}^N$ so that $\concatvec{j}=[]$ for all $j$.

\FOR{$k=1$ to $K$}

\IF{RBC}

\FOR{$j=1$ to $N$}

\STATE Append $\A_{j}(\Theta_{t_k})\Tilde{\x_j}(\Theta_{t_k})^\top$ to $\concatvec{j}$ according to Equation \eqref{eq: method_one_objective_function}

\ENDFOR
\ELSIF{GBC}
\STATE Compute $A^{(l)}(\Theta_{t_k})$ and $A(\Theta_{t_k})$ for all $l (l=1,2,\dots,L)$ as described in Section \nameref{sec: method_two}

\FOR{$j=1$ to $N$}
\STATE Conduct importance sampling from $[\nabla_{\Theta^{(1)}}f_j(\Theta_{t_k}) \dots \nabla_{\Theta^{(L)}}f_j(\Theta_{t_k})]$  
with sampling probability $[\frac{A^{(1)}(\Theta_{t_k})}{A(\Theta_{t_k})}, \frac{A^{(2)}(\Theta_{t_k})}{A(\Theta_{t_k})}, \dots, \frac{A^{(L)}(\Theta_{t_k})}{A(\Theta_{t_k})}]$, resulting in a list of gradients from $R$ layers: $\text{Grad}_{j} = [\nabla_{\Theta^{(l_1)}}f_j(\Theta_{t_k}), \nabla_{\Theta^{(l_2)}}f_j(\Theta_{t_k}), \dots,  \nabla_{\Theta^{(l_R)}}f_j(\Theta_{t_k})]$

\STATE Append $\text{Grad}_{j}$ to $\concatvec{j}$


\ENDFOR
\ENDIF

\ENDFOR
\FOR{$j=1$ to $N$}
\STATE Organize $\concatvec{j}$ as a block matrix
\ENDFOR

\caption{RBC and GBC}\label{alg: rbc_gbc}
\end{algorithmic}
\end{algorithm}

\subsubsection{Details of the adapted K-means algorithm}\label{appendix: adapted_k_means}

We tailor the vanilla K-means clustering algorithm to efficiently solve Equation \eqref{eq: weighted_k_means_obj_0}. Note that the K-means clustering algorithm is composed of two steps, i.e., the \textbf{assignment step} and the \textbf{update step}, which are conducted alternatively until convergence. In the \textbf{assignment step} of the modified K-means clustering algorithm, we follow the same principle of the vanilla K-means clustering algorithm. Specifically, we assign each training sample to its nearest cluster centroid, where the similarity between each training sample $j$ and each cluster centroid $i$ is the cosine similarity weighted by the norm of the centroid $\centroid_i$:
\begin{small}
\begin{align}\label{eq: weighted_cosine_sim}
    \begin{split}
        \|\centroid_i\|\cdot |\text{cosine}(\concatvec{j}, \centroid_i)|
    \end{split}
\end{align}
\end{small}

After each training sample $j$ is assigned to a certain cluster centroid $\centroid_i$, we proceed to update the cluster centroids in the \textbf{update step} given their assigned training samples. Indeed, according to Equation \eqref{eq: weighted_k_means_obj_0}, we conduct clustering on the normalized gradients, $\frac{\concatvec{j}}{\|\concatvec{j}\|}$, rather than $\concatvec{j}$ itself. Plus, since the overall similarity between each training sample $j$ and all clustering centroids is weighted by the norm of $\concatvec{j}$, we therefore update the cluster centroids by leveraging the following formula:

\begin{small}
\begin{align}\label{eq: weighted_k_means_update_step}
\begin{split}
    \centroid_i \leftarrow \sum\nolimits_{j \in \text{cluster }i} \frac{\|\concatvec{j}\|\cdot\frac{\concatvec{j}}{\|\concatvec{j}\|}}{\sum\nolimits_{j \in \text{cluster }i} \|\concatvec{j}\|} = \sum\nolimits_{j \in \text{cluster }i} \frac{\concatvec{j}}{\sum\nolimits_{j \in \text{cluster }i} \|\concatvec{j}\|},
\end{split}    
\end{align}
\end{small}

in which, the cluster centroid $\centroid_i$ is updated as the weighted mean of all the {\em normalized} samples that are assigned to this cluster.

\revise{The value of each $G_j$ is computed based on if RBC or GBC is used. The details are summarized in Algorithm \ref{alg: rbc_gbc}.}

In the end, we summarize this adapted K-means clustering algorithm in 
Algorithm \ref{alg: weighted_k_means}.

\subsubsection{Determining number of clusters and continuously adding meta samples}\label{appendix: cluster_count}
In this section, we further discuss how to determine the number of clusters and how to continuously add meta samples while the meta re-weighting algorithm is repetitively invoked.

First of all, we assume that the number of clusters, $M$, could be provided by the users. However, we observe that given an inappropriately large $M$, empty clusters are often generated, meaning that no samples are assigned to these clusters. Suppose that there are $M_{\text{empty}}$ empty clusters in total, we therefore restart the K-means clustering algorithm with the number of clusters as $M-M_{\text{cluster}}$. This process is repeated until there are no empty clusters. We then identify meta samples with the resulting clusters, which are used in the meta re-weighting algorithm.
Suppose we get model parameters ${\Theta}_t (t=1,2,\dots, T)$ from the meta re-weighting algorithm, we can also optionally run \ourmethodone\ or \ourmethodtwo\ again by leveraging ${\Theta}_t$ so that we can add more meta samples. This process is summarized in Algorithm \ref{alg: overall_algorithm}. Note that in the subsequent invocation of \ourmethodone\ or \ourmethodtwo, we remove a certain portion of training samples (e.g., half of them) that are closest to the existing meta samples and only cluster the remaining training samples to discover new meta samples. 




\section{Appendix: Mathematical details}

\subsection{Derivation of Equation \eqref{eq: gradient_inner_prod}}

First of all, we compute the partial gradient of Equation \eqref{eq: model_pre_update} with respect to $\w_j$, i.e.:
\begin{small}
\begin{align}\label{eq: model_preupdate_weight_grad}
\begin{split}
\frac{\partial \hat{\Theta}(\bw_t)}{\partial \w_j}& = \frac{\partial}{\partial \w_j}[\Theta_t - \alpha_t \cdot \frac{1}{N}\sum\nolimits_{r=1}^N\w_{r,t} \nabla_{\Theta} \f{r}{\Theta}|_{\Theta = \Theta_t}]\\
    & = - \frac{\alpha_t}{N} \nabla_{\Theta} \f{j}{\Theta}|_{\Theta = \Theta_t},
\end{split}
\end{align}
\end{small}

which utilizes the fact that except for $\w_{j,t} \nabla_{\theta} \f{j}{\Theta}$, all the other terms in Equation \eqref{eq: model_pre_update} do not depend on the weight $\w_{j,t}$. 



In addition, we utilize chain rule on the gradient of Equation \eqref{eq: sample_weights_update}, leading to:
\begin{small}
\begin{align}\label{eq: single_sample_weights_update_2}
\begin{split}
    & \nabla_{\w_j} \mf{i}{\hat{\Theta}(\bw_t)}= \langle\nabla_{\Theta}\mf{i}{\Theta}|_{\Theta = \hat{\Theta}(\bw_t)}, \frac{\partial \hat{\Theta}(\bw_t)}{\w_j}\rangle\\
    & = - \frac{\alpha_t}{N} \langle\nabla_{\Theta}\mf{i}{\Theta}|_{\Theta = \hat{\Theta}(\bw_t)}, \nabla_{\Theta} \f{j}{\Theta}|_{\Theta = \Theta_t}\rangle
\end{split}
\end{align}
\end{small}

\subsection{Derivation of Equation \eqref{eq: method_one_objective_function}}\label{appedix:derivation_method_one_objective}

In the main paper, we have used $\langle,\rangle$ to denote the \textit{Frobenius inner product} between matrices. But in the following analysis, the inner products between vectors will also appear, which are also conventionally denoted as $\langle,\rangle$. Therefore, to distinguish between these two types of inner products in what follows, we use $\langle,\rangle_F$ rather than $\langle,\rangle$ to represent the Frobenius inner product between matrices while $\langle,\rangle$ is used for representing the inner product between vectors.

First of all, suppose the loss function is cross-entropy loss, then we could have the following lemma for this loss:
\begin{lemma}\label{lemma_cl}
For cross-entropy loss, we can write it as the following form:
\begin{align}\label{eq: cross_entropy}
    L(\x, \y) = \log (\frac{\exp^{-x_j}}{\sum_{k=1}^m \exp^{-x_k}}),
\end{align}
in which we assume that $\y = j\ (j \in \{1,2,\dots, m\})$ and $\x = [x_1,x_2,\dots, x_m]$ is a vector of length $m$. 
Then the gradient of $L(\x,\y)$ with respect to the input $\x$ could be split into two parts, i.e. the label-dependent part and the label-free part.
\end{lemma}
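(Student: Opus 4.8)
\textbf{Proof proposal for Lemma \ref{lemma_cl}.}

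The plan is to compute the gradient of the cross-entropy loss $L(\x,\y)$ with respect to its vector input $\x$ directly from the closed form in Equation \eqref{eq: cross_entropy}, and then to read off a natural decomposition of the resulting gradient vector into a term that does not involve the label $\y=j$ and a term that does. First I would rewrite the loss as $L(\x,\y) = -x_j - \log\left(\sum_{k=1}^m \exp^{-x_k}\right)$ using the logarithm of the quotient, so that the dependence on the label index $j$ is isolated in the single linear term $-x_j$ while the log-sum-exp term is symmetric in all coordinates and hence label-free.

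Next I would differentiate coordinate-wise. For a coordinate $r \ne j$ the first term contributes nothing, and differentiating $-\log\left(\sum_{k=1}^m \exp^{-x_k}\right)$ with respect to $x_r$ gives $\exp^{-x_r}/\sum_{k=1}^m \exp^{-x_k}$, i.e.\ the $r$-th softmax probability $\text{softmax}(\x)_r$ (with the sign convention in Equation \eqref{eq: cross_entropy}, up to the overall sign). For the coordinate $r=j$ the linear term contributes $-1$, so the $j$-th entry of the gradient is $\text{softmax}(\x)_j - 1$. Collecting these into a vector, I would write $\nabla_{\x} L(\x,\y) = \text{softmax}(\x) - \text{onehot}(\y)$, which exhibits exactly the split claimed: the \emph{label-free part} is $\text{softmax}(\x)$, which depends only on $\x$, and the \emph{label-dependent part} is $-\text{onehot}(\y)$, a one-hot vector determined entirely by the label $\y=j$ and independent of the numerical values in $\x$. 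This matches the coefficient $\A_j(\Theta_t)$ in Equation \eqref{eq: method_one_coeff0}, which is precisely this gradient evaluated at the pre-activation input to the last layer.

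Since the loss is a fixed smooth function of $\x$ and the differentiation is elementary, there is no real obstacle here; the only point requiring a little care is the bookkeeping of signs induced by the $\exp^{-x_k}$ convention in Equation \eqref{eq: cross_entropy}, and making sure the decomposition is stated as an additive one (a label-free vector plus a label-dependent vector) rather than merely a coordinate-wise case split, so that it can be propagated through the chain rule into the parameter gradients $\nabla_\Theta \f{j}{\Theta}$ used later. Once this additive form is established, the downstream claim — that one may retain only the label-free part $\text{softmax}(\x)$ as input to \ourmethodone\ and \ourmethodtwo\ when clean labels are unavailable — follows by simply dropping the $-\text{onehot}(\y)$ summand, with the quantitative justification deferred to Theorem \ref{theorem: approximate_label_free}.
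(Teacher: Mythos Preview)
Your proposal is correct and follows essentially the same approach as the paper: both compute the coordinate-wise partial derivatives of $L(\x,\y)$ and assemble them into the vector identity $\nabla_{\x} L(\x,\y) = \text{softmax}(\x) - \text{onehot}(\y)$, identifying the first summand as the label-free part and the second as the label-dependent part. Your preliminary rewriting $L = -x_j - \log\bigl(\sum_k \exp^{-x_k}\bigr)$ is a slight stylistic variation that isolates the label dependence up front, but the computation and conclusion are identical to the paper's.
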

\begin{proof}
The gradient of  $L(\x, \y)$ with respect to $\x$ could be derived as follows:
\begin{small}
\begin{align*}
    \frac{\partial L(\x, \y)}{\partial \x} = [\frac{\partial L(\x, \y)}{\partial x_1}, \frac{\partial L(\x, \y)}{\partial x_2}, \dots, \frac{\partial L(\x, \y)}{\partial x_r}, \dots, \frac{\partial L(\x, \y)}{\partial x_m}]^\top,
\end{align*}
\end{small}

in which,
\begin{small}
\begin{align*}
\frac{\partial L(\x, \y)}{\partial x_r} = \begin{cases} \frac{\exp^{-x_r}}{\sum_{k=1}^m \exp^{-x_k}}, r \neq y \\
\frac{\exp^{-x_r}}{\sum_{k=1}^m \exp^{-x_k}}-1, r = y
\end{cases}    
\end{align*}
\end{small}

As a consequence, $\frac{\partial L(\x, \y)}{\partial \x}$ could be written as:
\begin{small}
\begin{align}\label{eq: grad_decomposition}
    \begin{split}
        \frac{\partial L(\x, \y)}{\partial \x} & = [\frac{\partial L(\x, \y)}{\partial x_1}, \frac{\partial L(\x, \y)}{\partial x_2}, \dots, \frac{\partial L(\x, \y)}{\partial x_r}, \dots, \frac{\partial L(\x, \y)}{\partial x_m}]^\top\\
        & = [\frac{\exp^{-x_1}}{\sum_{k=1}^m \exp^{-x_k}}, \frac{\exp^{-x_2}}{\sum_{k=1}^m \exp^{-x_k}}, \dots, \frac{\exp^{-x_r}}{\sum_{k=1}^m \exp^{-x_k}}, \dots, \frac{\exp^{-x_m}}{\sum_{k=1}^m \exp^{-x_k}}]^\top - \text{onehot}(j)\\
        & = \underbrace{\text{softmax}(\x)}_{\textit{label free part}} - \underbrace{\text{onehot}(y)}_{\textit{label dependent part}}
    \end{split}
\end{align}
\end{small}


\end{proof}

Note that when we only consider the gradient of the last layer, whose parameters are denoted as $\Theta^{(-1)}$, $\f{j}{\Theta}$ and $\mf{i}{\Theta}$ could be represented as:
\begin{small}
\begin{align*}
    \f{j}{\Theta} = L(\lastxout{j}(\Theta),\y_j) = L(\Theta^{(-1)}\Tilde{\x_j}(\Theta), \y_j),
    \mf{i}{\Theta} = L(\lastxout{\text{meta},i}(\Theta),\my{i}) = L(\Theta^{(-1)}\Tilde{\mx{i}}(\Theta), \my{i}). 
\end{align*}
\end{small}

Recall that in Section \nameref{sec: method_one} and Section \nameref{sec: application_label_noise}, we use $\lastxout{j}(\Theta)$ and $\Tilde{\x_j}(\Theta)$ to denote the input of the last linear layer and the input of the softmax layer (i.e. the output of the last linear layer) given the training sample $j$. Similarly, $\lastxout{\text{meta},i}(\Theta)$ and $\Tilde{\mx{i}}(\Theta)$ represent the input and the output of the last linear layer given the meta sample $i$.

Then we could derive the gradient of $\f{j}{\Theta}$ (same for $\mf{i}{\Theta}$) with respect to the vectorized $\Theta^{(-1)}$ by leveraging the chain rule, leading to:
\begin{small}
\begin{align}\label{eq: vectorized_inner_prod_last_layer}
    \nabla_{\vect{\Theta^{(-1)}}} \f{j}{\Theta} = \frac{\partial \lastxout{j}(\Theta)}{\partial \vect{\Theta^{(-1)}}}\cdot \frac{\partial L(\lastxout{j}(\Theta), \Tilde{\y_j})}{\partial \lastxout{j}(\Theta)}.
\end{align}
\end{small}

By leveraging Lemma \ref{lemma_cl}, the above formula could be rewritten as:
\begin{small}
\begin{align*}
    \nabla_{\vect{\Theta^{(-1)}}} \f{j}{\Theta} = \frac{\partial \lastxout{j}(\Theta)}{\partial \vect{\Theta^{(-1)}}}\cdot  [\text{softmax}(\Theta^{(-1)}\Tilde{\x_j}(\Theta)) - \text{onehot}(\y_j)]
\end{align*}
\end{small}

and for $\frac{\partial \lastxout{j}(\Theta)}{\partial \vect{\Theta^{(-1)}}}$, it could be further derived as follows by utilizing the calculus on block matrix:
\begin{small}
\begin{align*}
    \frac{\partial \lastxout{j}(\Theta)}{\partial \vect{\Theta^{(-1)}}} =\frac{\partial \Theta^{(-1)}\Tilde{\x_j}(\Theta)}{\partial \vect{\Theta^{(-1)}}} = [\Tilde{\x_j}(\Theta) \otimes \textbf{I}]^\top = [\Tilde{\x_{j,1}}(\Theta)\textbf{I}, \Tilde{\x_{j,2}}(\Theta)\textbf{I}, \dots \Tilde{\x_{j,m}}(\Theta)\textbf{I}]^\top
\end{align*}
\end{small}

in which $\x_{j,r}(\Theta)$ denotes the $r_{th}$ entry of the vector $\x_{j}(\Theta)$ and $\otimes$ denotes the Kronecker product \citesupp{henderson1983history} on two matrices.

We then plug the above formula into Equation \eqref{eq: vectorized_inner_prod_last_layer}, resulting in:
\begin{small}
\begin{align*}
    \nabla_{\vect{\Theta^{(-1)}}} \f{j}{\Theta}& = [\Tilde{\x_{j,1}}(\Theta)\textbf{I}, \Tilde{\x_{j,2}}(\Theta)\textbf{I}, \dots \Tilde{\x_{j,m}}(\Theta)\textbf{I}]^\top \cdot [\text{softmax}(\Theta^{(-1)}\Tilde{\x_j}(\Theta)) - \text{onehot}(\y_j)]\\
    & = \vect{\left[\text{softmax}(\Theta^{(-1)}\Tilde{\x_j}(\Theta)) - \text{onehot}(\y_j)\right] \Tilde{\x_{j}}(\Theta)^\top},
\end{align*}
\end{small}

Recall that in Section \nameref{sec: method_one}, we use $\A_j(\Theta)$ to denote $\text{softmax}(\Theta^{(-1)}\Tilde{\x_j}(\Theta)) - \text{onehot}(\y_j)$. Therefore, the above formula could be rewritten as:
\begin{small}
\begin{align*}
\begin{split}
    \nabla_{\vect{\Theta^{(-1)}}} \f{j}{\Theta} = \vect{\A_j(\Theta)\Tilde{\x_{j}}(\Theta)^\top}
\end{split}
\end{align*}
\end{small}

Similarly, the following equation holds for the meta sample $i$:
\begin{small}
\begin{align*}
    \begin{split}
        \nabla_{\vect{\Theta^{(-1)}}} \mf{i}{\Theta} = \vect{\A_{\text{meta}, i}(\Theta)\Tilde{\mx{i}}(\Theta)^\top}
    \end{split}
\end{align*}
\end{small}

As a result, we can compute the inner product between $\nabla_{\vect{\Theta^{(-1)}}} \mf{i}{\Theta}$ and $\nabla_{\vect{\Theta^{(-1)}}} \f{j}{\Theta}$ by leveraging the above two formulas, leading to:
\begin{small}
\begin{align*}
    \begin{split}
        \langle \nabla_{\vect{\Theta^{(-1)}}} \mf{i}{\Theta}, \nabla_{\vect{\Theta^{(-1)}}} \f{j}{\Theta} \rangle & = \langle \vect{\A_j(\Theta)\Tilde{\x_{j}}(\Theta)^\top}, \vect{\A_{\text{meta}, i}(\Theta)\Tilde{\mx{i}}(\Theta)^\top} \rangle\\
        & = \langle \A_j(\Theta)\Tilde{\x_{j}}(\Theta)^\top, \A_{\text{meta}, i}(\Theta)\Tilde{\mx{i}}(\Theta)^\top \rangle_F
    \end{split}
\end{align*}
\end{small}

We can then plug the above formula into Equation \eqref{eq: our_init_objective}, i.e. MSSO, leading to:
\begin{small}
\begin{align*}
    \text{MSSO} & = \max_{\Dmeta}\sum\nolimits_{j=1}^N \left|\sum\nolimits_{i=1}^M \langle\concatvec{j}, \concatvec{\text{meta}, i}\rangle\right|\\
    & = \max_{\Dmeta}\sum\nolimits_{j=1}^N \left|\sum\nolimits_{i=1}^M \sum\nolimits_{\Theta_t}\langle\nabla_{\Theta} \f{j}{\Theta}|_{\Theta=\Theta_t}, \nabla_{\Theta} \mf{i}{\Theta}|_{\Theta=\Theta_t}\rangle\right|\\
    & = \max_{\Dmeta}\sum\nolimits_{j=1}^N \left|\sum\nolimits_{i=1}^M \sum\nolimits_{\Theta_t}\langle\nabla_{\vect{\Theta}} \f{j}{\Theta}|_{\Theta=\Theta_t}, \nabla_{\vect{\Theta}} \mf{i}{\Theta}|_{\Theta=\Theta_t}\rangle\right|\\
    & \approx \max_{\Dmeta}\sum\nolimits_{j=1}^N \left|\sum\nolimits_{i=1}^M \sum\nolimits_{\Theta_t}\langle\nabla_{\vect{\Theta^{(-1)}}} \f{j}{\Theta}|_{\Theta=\Theta_t}, \nabla_{\vect{\Theta^{(-1)}}} \mf{i}{\Theta}|_{\Theta=\Theta_t}\rangle\right|\\
    & = \max_{\Dmeta}\sum\nolimits_{j=1}^N \left|\sum\nolimits_{i=1}^M \sum\nolimits_{\Theta_t}\langle\A_{j}(\Theta_t)\Tilde{\x_j}(\Theta_t)^\top, \A_{\text{meta}, i}(\Theta_t)\Tilde{\mx{i}}(\Theta_t)^\top\rangle_F\right|
\end{align*}
\end{small}

This thus concludes the derivation of Equation \eqref{eq: method_one_objective_function}.

\subsection{Proof of Theorem \ref{theorem: main}}\label{sec: theorem_main}
\begin{proof}
By utilizing the following property concerning the absolute values of the sum of two numbers \citesupp{stewart2011calculus}:
\begin{small}
\begin{align*}
    |a+b| = ||a| - |b||, \text{if } a\cdot b \leq 0,
\end{align*}
\end{small}



the inner most sum in Equation \eqref{eq: our_init_objective} could be rewritten as follows:
\begin{small}
\begin{align*}
    & |\sum\nolimits_{i=1}^M \langle\concatvec{j}, \centroid_i\rangle|\\
    & = |\sum\nolimits_{\langle\concatvec{j}, \centroid_i\rangle > 0} \langle\concatvec{j}, \centroid_i\rangle + \sum\nolimits_{\langle\concatvec{j}, \centroid_i\rangle < 0} \langle\concatvec{j}, \centroid_i\rangle|\\
    & = ||\sum\nolimits_{\langle\concatvec{j}, \centroid_i\rangle > 0} \langle\concatvec{j}, \centroid_i\rangle| - |\sum\nolimits_{\langle\concatvec{j}, \centroid_i\rangle < 0} \langle\concatvec{j}, \centroid_i\rangle||
\end{align*}
\end{small}

Also, by utilizing the following property concerning the sum of the absolute value of two numbers \citesupp{stewart2011calculus}:
\begin{small}
\begin{align*}
    |a|+|b| = |a+b|, \text{if } a\cdot b \geq 0
\end{align*}
\end{small}

for the innermost sum in Equation \eqref{eq: transformed_k_means_objective} could be rewritten as follows:
\begin{small}
\begin{align*}
    & \sum\nolimits_{i=1}^M |\langle\concatvec{j}, \centroid_i\rangle|\\
    & = \sum\nolimits_{\langle\concatvec{j}, \centroid_i\rangle > 0}| \langle\concatvec{j}, \centroid_i\rangle| + \sum_{\langle\concatvec{j}, \centroid_i\rangle < 0} |\langle\concatvec{j}, \centroid_i\rangle|\\
    & = |\sum\nolimits_{\langle\concatvec{j}, \centroid_i\rangle > 0} \langle\concatvec{j}, \centroid_i\rangle| + |\sum\nolimits_{\langle\concatvec{j}, \centroid_i\rangle < 0}\langle\concatvec{j}, \centroid_i\rangle|
\end{align*}
\end{small}

Then we compute the ratio between the above two formulas, leading to:
\begin{small}
\begin{align}\label{eq: theorem_bound_derivation_1}
\begin{split}
    &\frac{|\sum\nolimits_{i=1}^M \langle\concatvec{j}, \centroid_i\rangle|}{\sum\nolimits_{i=1}^M |\langle\concatvec{j}, \centroid_i\rangle|} = \frac{||\sum\nolimits_{\langle\concatvec{j}, \centroid_i\rangle > 0} \langle\concatvec{j}, \centroid_i\rangle| - |\sum\nolimits_{\langle\concatvec{j}, \centroid_i\rangle < 0} \langle\concatvec{j}, \centroid_i\rangle||}{|\sum\nolimits_{\langle\concatvec{j}, \centroid_i\rangle > 0} \langle\concatvec{j}, \centroid_i\rangle| + |\sum\nolimits_{\langle\concatvec{j}, \centroid_i\rangle < 0} \langle\concatvec{j}, \centroid_i\rangle|}\\
    & = \begin{cases}
    \frac{\frac{|\sum\nolimits_{\langle\concatvec{j}, \centroid_i\rangle < 0} \langle\concatvec{j}, \centroid_i\rangle|}{|\sum\nolimits_{\langle\concatvec{j}, \centroid_i\rangle > 0} \langle\concatvec{j}, \centroid_i\rangle|}-1}{\frac{|\sum\nolimits_{\langle\concatvec{j}, \centroid_i\rangle < 0} \langle\concatvec{j}, \centroid_i\rangle|}{|\sum\nolimits_{\langle\concatvec{j}, \centroid_i\rangle > 0} \langle\concatvec{j}, \centroid_i\rangle|}+1} = 1 - \frac{2}{\frac{|\sum\nolimits_{\langle\concatvec{j}, \centroid_i\rangle < 0} \langle\concatvec{j}, \centroid_i\rangle|}{|\sum\nolimits_{\langle\concatvec{j}, \centroid_i\rangle > 0} \langle\concatvec{j}, \centroid_i\rangle|}+1} & \text{if } |\sum\nolimits_{\langle\concatvec{j}, \centroid_i\rangle < 0} \langle\concatvec{j}, \centroid_i\rangle| > |\sum\nolimits_{\langle\concatvec{j}, \centroid_i\rangle > 0} \langle\concatvec{j}, \centroid_i\rangle|\\
    \frac{\frac{|\sum\nolimits_{\langle\concatvec{j}, \centroid_i\rangle > 0} \langle\concatvec{j}, \centroid_i\rangle|}{|\sum\nolimits_{\langle\concatvec{j}, \centroid_i\rangle < 0} \langle\concatvec{j}, \centroid_i\rangle|}-1}{\frac{|\sum\nolimits_{\langle\concatvec{j}, \centroid_i\rangle > 0} \langle\concatvec{j}, \centroid_i\rangle|}{|\sum\nolimits_{\langle\concatvec{j}, \centroid_i\rangle < 0} \langle\concatvec{j}, \centroid_i\rangle|}+1} = 1 - \frac{2}{\frac{|\sum\nolimits_{\langle\concatvec{j}, \centroid_i\rangle > 0} \langle\concatvec{j}, \centroid_i\rangle|}{|\sum\nolimits_{\langle\concatvec{j}, \centroid_i\rangle < 0} \langle\concatvec{j}, \centroid_i\rangle|}+1} & \text{if } |\sum\nolimits_{\langle\concatvec{j}, \centroid_i\rangle > 0} \langle\concatvec{j}, \centroid_i\rangle| > |\sum\nolimits_{\langle\concatvec{j}, \centroid_i\rangle < 0} \langle\concatvec{j}, \centroid_i\rangle|
    \end{cases}
\end{split}
\end{align}
\end{small}

Then by leveraging the assumption of this Theorem, we know that:
\begin{small}
\begin{align*}
\begin{cases}
    \frac{|\sum\nolimits_{\langle\concatvec{j}, \centroid_i\rangle > 0} \langle\concatvec{j}, \centroid_i\rangle|}{|\sum\nolimits_{\langle\concatvec{j}, \centroid_i\rangle < 0} \langle\concatvec{j}, \centroid_i\rangle|} > D & \text{ if } |\sum\nolimits_{\langle\concatvec{j}, \centroid_i\rangle < 0} \langle\concatvec{j}, \centroid_i\rangle| > |\sum\nolimits_{\langle\concatvec{j}, \centroid_i\rangle > 0} \langle\concatvec{j}, \centroid_i\rangle|\\
    \frac{|\sum\nolimits_{\langle\concatvec{j}, \centroid_i\rangle < 0} \langle\concatvec{j}, \centroid_i\rangle|}{|\sum\nolimits_{\langle\concatvec{j}, \centroid_i\rangle > 0} \langle\concatvec{j}, \centroid_i\rangle|} > D & \text{ if } |\sum\nolimits_{\langle\concatvec{j}, \centroid_i\rangle > 0} \langle\concatvec{j}, \centroid_i\rangle| > |\sum\nolimits_{\langle\concatvec{j}, \centroid_i\rangle < 0} \langle\concatvec{j}, \centroid_i\rangle|
\end{cases}
\end{align*}
\end{small}

Therefore, Equation \eqref{eq: theorem_bound_derivation_1} could be bounded as follows:
\begin{small}
\begin{align*}
    \text{Equation }\eqref{eq: theorem_bound_derivation_1} & = \begin{cases}
    1 - \frac{2}{\frac{|\sum\nolimits_{\langle\concatvec{j}, \centroid_i\rangle < 0} \langle\concatvec{j}, \centroid_i\rangle|}{|\sum\nolimits_{\langle\concatvec{j}, \centroid_i\rangle > 0} \langle\concatvec{j}, \centroid_i\rangle|}+1} & \text{if } |\sum\nolimits_{\langle\concatvec{j}, \centroid_i\rangle < 0} \langle\concatvec{j}, \centroid_i\rangle| > |\sum\nolimits_{\langle\concatvec{j}, \centroid_i\rangle > 0} \langle\concatvec{j}, \centroid_i\rangle|\\
    1 - \frac{2}{\frac{|\sum\nolimits_{\langle\concatvec{j}, \centroid_i\rangle > 0} \langle\concatvec{j}, \centroid_i\rangle|}{|\sum\nolimits_{\langle\concatvec{j}, \centroid_i\rangle < 0} \langle\concatvec{j}, \centroid_i\rangle|}+1} & \text{if } |\sum\nolimits_{\langle\concatvec{j}, \centroid_i\rangle > 0} \langle\concatvec{j}, \centroid_i\rangle| > |\sum\nolimits_{\langle\concatvec{j}, \centroid_i\rangle < 0} \langle\concatvec{j}, \centroid_i\rangle|
    \end{cases}\\
    & > \begin{cases}
    1 - \frac{2}{D+1} & \text{if } |\sum\nolimits_{\langle\concatvec{j}, \centroid_i\rangle < 0} \langle\concatvec{j}, \centroid_i\rangle| > |\sum\nolimits_{\langle\concatvec{j}, \centroid_i\rangle > 0} \langle\concatvec{j}, \centroid_i\rangle|\\
    1 - \frac{2}{D+1} & \text{if } |\sum\nolimits_{\langle\concatvec{j}, \centroid_i\rangle > 0} \langle\concatvec{j}, \centroid_i\rangle| > |\sum\nolimits_{\langle\concatvec{j}, \centroid_i\rangle < 0} \langle\concatvec{j}, \centroid_i\rangle|
    \end{cases}\\
    & = \frac{D-1}{D+1}
\end{align*}
\end{small}

In Section \nameref{sec: supple_exp}, we will calculate the value of $D$ empirically. Recall that the value of $D$ needs to be significantly larger than 1 so that solving MCO (i.e. Equation \eqref{eq: transformed_k_means_objective}) could end up with a reasonable approximation to the solution of MSSO (i.e. Equation \eqref{eq: our_init_objective}), which could be empirically justified on \mnist\ and \cifar\ dataset.

\end{proof}

\subsection{Generalization of our methods for \citesupp{ren2018learning}}\label{appendix: generalization_method}
\revise{As discussed in Section \nameref{sec: background}, our methods mainly utilize the meta-reweighting algorithm proposed in \citesupp{shu2019meta}. However, we can show that our methods can also support other meta-reweighting algorithms, such as \citesupp{ren2018learning}, which we illustrate in this section.}


\revise{First of all, we notice that \citesupp{ren2018learning} and \citesupp{shu2019meta} mainly differ in how to update the sample weights at each time step. Specifically, we reformulate Equation \eqref{eq: model_pre_update}-Equation \eqref{eq: model_post_update} according to \citesupp{ren2018learning} as follows:}
\begin{small}
\begin{empheq}[box=\widefbox]{align}
    & \textbf{Meta re-weighting \citesupp{ren2018learning}:} \nonumber \\
    &\hat{\Theta}(\bw_t) = \Theta_t - \alpha_t \cdot \frac{1}{N}\sum\nolimits_{j=1}^N\w_{j,t} \nabla_{\Theta} \f{j}{\Theta}|_{\Theta = \Theta_t, \w_{j,t}=0}
    \label{eq: model_pre_update_general}\\
    & \hat{\w}_{j,t+1} =  - \eta_t \cdot \frac{1}{M}\sum\nolimits_{i=1}^M \nabla_{\w_{j,t}} \mf{i}{\hat{\Theta}(\bw_t)} |_{\w_{j,t}=0}
    \label{eq: sample_weights_update_general}\\
    & \hat{\w}_{j,t+1} = \max\{\hat{\w}_{j,t+1}, 0\}, \w_{j,t+1} = \text{normalize}([\hat{\w}_{1,t+1}, \hat{\w}_{2,t+1},\dots,\hat{\w}_{N,t+1}]) \label{eq: sample_weights_update_general2}\\
    & \Theta_{t+1} = \Theta_t - \alpha_t\cdot \frac{1}{N}\sum\nolimits_{j=1}^N \w_{j,t+1}\nabla_{\Theta} \f{j}{\Theta}|_{\Theta = \Theta_t}
    \label{eq: model_post_update_general},
\end{empheq}
\end{small}

\revise{Recall that the intuition of our methods is to find meta samples so that each sample weight $\w_j$ is effectively updated during the training process instead of staying close to its random initialization. Therefore, we hope to maximize the following cumulative gradients across all the samples during the entire training process, i.e.:}
\begin{small}
\begin{align}\label{eq: meta_reweighting_goal_general}
    \max_{\Dmeta}\sum\nolimits_{\hat{\Theta}(\bw_t)} \frac{1}{M}\sum\nolimits_{i=1}^M \nabla_{\w_j} \mf{i}{\hat{\Theta}(\bw_t)}, \text{for all $j=(1,2,\dots,N)$},
\end{align}
\end{small}

\revise{Note that different from Equation \eqref{eq: meta_reweighting_goal}, no absolute value is calculated in Equation \eqref{eq: meta_reweighting_goal_general}. Then by following the same derivation as Equation \eqref{eq: gradient_inner_prod}, we can get the following formula which is similar to Equation \eqref{eq: our_init_objective0}:}
\begin{small}
\begin{align}\label{eq: our_init_objective0_general}
\begin{split}
    & \max_{\Dmeta}\sum\nolimits_{j=1}^N{\sum\nolimits_{\hat{\Theta}(\bw_t), \Theta_t}\sum\nolimits_{i=1}^M \langle\nabla_{\Theta} \mf{i}{\Theta}|_{\Theta = \hat{\Theta}(\bw_t)}, \nabla_{\Theta} \f{j}{\Theta}|_{\Theta = \Theta_t}\rangle}\\
    & \approx \max_{\Dmeta}\sum\nolimits_{j=1}^N{\sum\nolimits_{i=1}^M \sum\nolimits_{\Theta_t}\langle\nabla_{\Theta} \mf{i}{\Theta}|_{\Theta =\Theta_t}, \nabla_{\Theta} \f{j}{\Theta}|_{\Theta = \Theta_t}\rangle}.\\
\end{split}
\end{align}
\end{small}

\revise{Again, the only difference between Equation \eqref{eq: our_init_objective0} and Equation \eqref{eq: our_init_objective0_general} is the existence of the absolute value computation. As a result, when the meta-reweighting algorithm from \citesupp{ren2018learning} is used, our methods, including both \ourmethodone\ and \ourmethodtwo, can be employed for selecting meta samples, except that the absolute value is not evaluated. }

\subsection{Analysis of the gradient with and without label-free part}\label{appendix: label_free_gradient}

According to Section \nameref{sec: application_label_noise}, in the presence of label noises, the sample-wise gradient $\nabla_{\Theta} \f{j}{\Theta}$ could be split into {\em label-free part} and the {\em label-dependent part}, which are represented as follows:
\begin{small}
\begin{align}\label{eq: split_gradient}
\begin{split}
    & \nabla_{\Theta} \f{j}{\Theta} = [\underbrace{\frac{\partial \lastxout{j}(\Theta)}{\partial \Theta}\cdot \text{softmax}(\lastxout{j}(\Theta))}_{\textit{label-free gradient}} - \underbrace{\frac{\partial \lastxout{j}(\Theta)}{\partial \Theta} \cdot \text{onehot}(\y)}_{\textit{label-dependent gradient}}],
\end{split}
\end{align}
\end{small}
in which $\lastxout{j}(\Theta)$ represents the input of the softmax layer of the neural model. The derivation of Equation \eqref{eq: split_gradient} is discussed in Lemma \ref{lemma_cl} in Appendix \nameref{appedix:derivation_method_one_objective}.

Recall that in Section \nameref{sec: application_label_noise}, we only focus on the label-free part in the sample-wise gradient, i.e., $\nabla_{\Theta} \widetilde{\f{j}{\Theta}}$. Therefore, by replacing $\nabla_{\Theta}\f{j}{\Theta}$ with $\nabla_{\Theta} \widetilde{\f{j}{\Theta}}$ and $\nabla_{\Theta} \mf{i}{\Theta}$ with $\nabla_{\Theta} \widetilde{\mf{i}{\Theta}}$ in Equation \eqref{eq: transformed_k_means_objective}, the objective function becomes:
\begin{small}
\begin{align}\label{eq: transformed_k_means_objective_label_free}
&\max_{\Dmeta}(\sum_{j=1}^N \sum_{i=1}^M |\langle \widetilde{\concatvec{j}}, \widetilde{\concatvec{\text{meta}, i}}\rangle |),
\end{align}
\end{small}
in which, 
\begin{small}
\begin{align*}
\widetilde{\concatvec{j}} = \left[ \nabla_{\Theta} \widetilde{\f{j}{\Theta}}|_{\Theta = \Theta_1} \cdots \nabla_{\Theta}\widetilde{\f{j}{\Theta}}|_{\Theta = \Theta_t} \cdots \right], \widetilde{\concatvec{\text{meta},i}} = \left[ \nabla_{\Theta} \widetilde{\mf{i}{\Theta}}|_{\Theta = \Theta_1} \cdots \nabla_{\Theta} \widetilde{\mf{i}{\Theta}}|_{\Theta = \Theta_t} \cdots \right]
\end{align*}
\end{small}

In what follows, i.e., Theorem \ref{theorem: approximate_label_free}, we are going to show that with some assumptions, solving Equation \eqref{eq: transformed_k_means_objective_label_free} can produce approximately the same clusters as the ones produced by solving Equation \eqref{eq: transformed_k_means_objective} where the sample labels are involved. 

\begin{theorem}\label{theorem: approximate_label_free}
Suppose solving Equation \eqref{eq: method_one_objective_function} with the weighted K-means algorithm, i.e., Algorithm \ref{alg: weighted_k_means}, assigns the training sample $j$ to the cluster centroid $c^{(j)}$. Then if the following assumptions hold for the training sample $i$:
\begin{enumerate}
    \item The similarity between the training sample $j$ and the assigned centroid $c^{(j)}$, 
    is greater than $\alpha_j$ and the similarity between each training sample $j$ and any other cluster centroid $i$ is smaller than $\beta_j$ ($\alpha_j > \beta_j$), i.e.:
    \begin{small}
    \begin{align*}
    &|\langle \widetilde{\concatvec{j}}, \widetilde{\concatvec{\text{meta}, c^{(j)}}}\rangle| \geq \alpha_j\\
    & |\langle \widetilde{\concatvec{j}}, \widetilde{\concatvec{\text{meta}, i}}\rangle| \leq \beta_j, i \neq c^{(j)}
    \end{align*}
    \end{small}
    \label{assp: bound}
    \item The ratio between $|\langle \widetilde{\concatvec{j}}, \widetilde{\concatvec{\text{meta}, i}}\rangle|$ and $|\langle \concatvec{j}, \concatvec{\text{meta},i}\rangle|$ is lower bounded by $L_j$ and upper bounded by $U_j$ (The values of $L_j$ and $U_j$ could depend on each training samples $j$). 
\end{enumerate}

then the following inequalities on $|\langle \concatvec{j}, \concatvec{\text{meta}, i}\rangle|$ hold:
\begin{small}
\begin{align*}
&|\langle \concatvec{j}, \concatvec{\text{meta}, c^{(j)}}\rangle| \geq L_j \alpha_j\\
& |\langle \concatvec{j}, \concatvec{\text{meta}, i}\rangle| \leq U_j \beta_j, i \neq c^{(j)},
\end{align*}
in which $\concatvec{\text{meta}, i}$ involves the ground-truth labels of the meta sample $i$. 
\end{small}

\end{theorem}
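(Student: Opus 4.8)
The statement follows by composing the two assumptions; the plan is simply to line up the inequalities in the correct order, with no genuine analytic work required. I would fix the training sample $j$ and read Assumption~2 as providing, for every centroid index $i\in\{1,\dots,M\}$, constants $0<L_j\le U_j$ (the positivity being the harmless nondegeneracy condition that the label-free and full similarities do not vanish, without which the bound $L_j\alpha_j$ is vacuous) such that
\begin{align*}
L_j\,|\langle \widetilde{\concatvec{j}}, \widetilde{\concatvec{\text{meta}, i}}\rangle| \;\le\; |\langle \concatvec{j}, \concatvec{\text{meta}, i}\rangle| \;\le\; U_j\,|\langle \widetilde{\concatvec{j}}, \widetilde{\concatvec{\text{meta}, i}}\rangle|.
\end{align*}

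Next I would split on the two cases appearing in the conclusion. For the centroid $c^{(j)}$ to which the weighted $K$-means of Algorithm~\ref{alg: weighted_k_means} (run on the label-free gradients) assigns $j$, I would chain the left-hand inequality above with the lower bound $|\langle \widetilde{\concatvec{j}}, \widetilde{\concatvec{\text{meta}, c^{(j)}}}\rangle| \ge \alpha_j$ from Assumption~1, obtaining $|\langle \concatvec{j}, \concatvec{\text{meta}, c^{(j)}}\rangle| \ge L_j\,|\langle \widetilde{\concatvec{j}}, \widetilde{\concatvec{\text{meta}, c^{(j)}}}\rangle| \ge L_j\alpha_j$. For any other centroid $i\neq c^{(j)}$, I would chain the right-hand inequality with the upper bound $|\langle \widetilde{\concatvec{j}}, \widetilde{\concatvec{\text{meta}, i}}\rangle| \le \beta_j$ from Assumption~1, obtaining $|\langle \concatvec{j}, \concatvec{\text{meta}, i}\rangle| \le U_j\,|\langle \widetilde{\concatvec{j}}, \widetilde{\concatvec{\text{meta}, i}}\rangle| \le U_j\beta_j$. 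These are exactly the two displayed inequalities of the theorem, so the argument closes there.

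Finally I would add a short remark on the intended interpretation: whenever the constants further satisfy $L_j\alpha_j > U_j\beta_j$, the bounds just derived show that the full-gradient similarity of $j$ to $c^{(j)}$ strictly exceeds its full-gradient similarity to every other centroid, so reinstating the label-dependent part of the gradient does not change the cluster membership of $j$ under the same assignment rule. The derivation itself presents no obstacle; the real substance of the theorem is whether its hypotheses — in particular the uniform ratio bracket $[L_j,U_j]$ of Assumption~2 holding simultaneously across all centroids, together with $L_j\alpha_j > U_j\beta_j$ — are met by a meaningful fraction of training samples, which is precisely what is deferred to the empirical study in Appendix~\nameref{sec: supple_exp}.
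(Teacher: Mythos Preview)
Your proposal is correct, and it is essentially the only argument there is: the conclusion follows by chaining Assumption~1 with the two sides of the ratio bracket in Assumption~2, exactly as you write it. The paper itself does not supply a proof for this theorem at all; it simply states the result and then adds the same interpretive remark you make (that $L_j\alpha_j > U_j\beta_j$ guarantees the cluster assignment of sample $j$ is unchanged when the label-dependent part is reinstated) before deferring the substantive question of how often the hypotheses hold to the empirical study.
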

This theorem thus suggests that if $L_j\alpha_j$ is greater than $U_j\beta_j$, then the cluster centroid $c^{j}$ could be still the closest centroid of the training sample $j$. In Appendix \nameref{appendix: quantitative_res}, we will empirically 
count how many training samples can still be assigned to the same cluster centroid after we change the similarity measure from $|\langle \widetilde{\concatvec{j}}, \widetilde{\concatvec{\text{meta}, i}}\rangle|$ to $|\langle \concatvec{j}, \concatvec{\text{meta}, i}\rangle|$.

\section{Supplemental experiments}\label{sec: supple_exp}
In this section, we provide some extra experiments which could not be included in the main paper. 

\subsection{Details of all hyper-parameters in the experiments}
To run meta re-weighting algorithm, we use SGD with initial learning rate 0.1, momentum 0.8, and weight decay $5\times10^{-4}$ for the \cifar\ experiments, and we use SGD with constant learning rate 0.1 for the \mnist\ experiments. We use a mini-batch size of 4096 and 128 for \mnist\ and \cifar\ respectively.
Following \citesupp{shu2019meta},  we also use a learning rate decay for \cifar\ experiments such that the learning rate is divided by 10 at epoch 80 and epoch 90 (100 epochs in total). 

As indicated in Section \nameref{sec: chicken_egg_problem}, for both \ourmethodone\ and \ourmethodtwo, we need to sample certain model parameters from each epoch of the warm-up phase, i.e. the meta re-weighting process with some randomly selected samples as the meta samples. We therefore sample those model parameters every 20 epochs after the epoch $t^*$ where the best model parameters occur. Plus, for \ourmethodtwo, as mentioned in Section \nameref{sec: method_two}, we sample several the model parameters at the granularity of network layers to estimate Equation \eqref{eq: inner_prod_by_layer_by_importance_score} and the number of the sampled network layers is set as 5. After collecting gradients of each training sample for \ourmethodone\ and \ourmethodtwo, we then run the weighted K-means clustering algorithm (i.e. Algorithm \ref{alg: weighted_k_means}) long enough. To guarantee the convergence, the number of epochs is set as 200.

\subsection{More quantitative results}\label{appendix: quantitative_res}

\subsubsection{Empirical evaluations of Theorem \ref{theorem: main}} Note that Theorem \ref{theorem: main} depends on the assumption that the positive part or the negative part in the innermost sum of Equation \eqref{eq: our_init_objective} are dominant over the negative terms or vice versa. This assumption is not theoretically analyzed, which is thus verified empirically in this section. Specifically, we calculate the ratio between the (dominant) positive part and the negative part (or the dominant negative part with respect to the positive part), (denoted as $D_j$) for each training sample $j$ in the label noise experiments when \ourmethodone\ is used. Note that the value of $D$ in Theorem \ref{theorem: main} equals to the minimum of all $D_j$. We therefore report the statistics of $D_j$
in Table \ref{Table: D_value}.

\begin{table}[ht]
\caption{The statistics of the value of $D_j (j=1,2,\dots,)$ in label noise experiments (\ourmethodone)}\label{Table: D_value}
\small
\centering
\begin{tabular}[!h]{>{\arraybackslash}p{2cm}|>{\centering\arraybackslash}p{1.8cm}>{\centering\arraybackslash}p{1.8cm}|>{\centering\arraybackslash}p{1.8cm}>{\centering\arraybackslash}p{1.8cm}} \hline\hline
 Dataset &\multicolumn{2}{c|}{\cifarone}&\multicolumn{2}{c}{\cifartwo} \\ \hline
 Noise type & uniform & adversarial & uniform & adversarial \\ \hline
minimum & 1.70 & inf & 1.01 & 1.01 \\ \hline
5\%-quantile & 4.38 & inf & 1.40 &1.68 \\ \hline \hline
\end{tabular}
\end{table}

As indicated in Table \ref{Table: D_value}, for both \cifarone\ with different types of noises, the values of $D$ are all significantly greater than 1 across all training samples, thus verifying the assumption of Theorem \ref{theorem: main} (inf means that all the terms in the innermost sum of Equation \eqref{eq: our_init_objective} have the same signs). Similar results are also observed in the class-imbalance experiments. For \cifartwo, although the minimum value of $D_j$ is almost 1, there are less than 5\% training samples with near-one $D_j$ value. Therefore, after removing this small portion of outlier samples, the assumption of Theorem \eqref{theorem: main} still holds.

\subsubsection{Empirical evaluations of Theorem \ref{theorem: approximate_label_free}}

Recall that due to the unavailable ground-truth labels when \ourmethodone\ or \ourmethodtwo\ are used in the presence of label noises, we proposed to only employ the label-free part in the sample-wise gradients as the input to our methods, which results in a label-free similarity measure, $|\langle \widetilde{\concatvec{j}}, \widetilde{\concatvec{\text{meta}, i}}\rangle|$, and a label-free objective function in Equation \eqref{eq: transformed_k_means_objective_label_free}. Theorem \ref{theorem: approximate_label_free} explored under what conditions, by using the label-aware similarity, $|\langle {\concatvec{j}}, {\concatvec{\text{meta}, i}}\rangle|$, the sample $j$ could still be closest to the cluster centroid, $c^{(j)}$, which is determined by solving the label-free objective function, Equation \eqref{eq: transformed_k_means_objective_label_free}. This type of samples are named as {\em stable samples} and we count how many such samples exist in the label noise experiments, which is reported in Table \ref{Table: label_free_approx_eval}.

\begin{table}[ht]
\caption{The number of stable samples (out of 25k) in the label noise experiments (\ourmethodone)}\label{Table: label_free_approx_eval}
\small
\centering
\begin{tabular}[!h]{>{\arraybackslash}p{2cm}|>{\centering\arraybackslash}p{1.8cm}>{\centering\arraybackslash}p{1.8cm}|>{\centering\arraybackslash}p{1.8cm}>{\centering\arraybackslash}p{1.8cm}} \hline\hline
 Dataset &\multicolumn{2}{c|}{\cifarone}&\multicolumn{2}{c}{\cifartwo} \\ \hline
 Noise type & uniform & adversarial & uniform & adversarial \\ \hline
count & 23593 & 20517 & 20543 & 20547 \\ \hline
\end{tabular}
\end{table}


Note that due to the existence of the meta samples from the warm-up phase, according to Algorithm \ref{alg: overall_algorithm}, we only conduct the weighted K-means clustering algorithm on the samples that are far away from the existing meta samples. As mentioned in Appendix \nameref{appendix: cluster_count}, the number of such samples is around half of the entire training set, i.e., around 25k for \cifarone\ and \cifartwo\ dataset. As indicated in Table \ref{Table: label_free_approx_eval}, over 80\% of the training samples are {\em stable samples} for both \cifarone\ and \cifartwo\ dataset. This thus suggests that clustering the label-free part of the sample-wise gradients in our methods could lead to a reasonable approximation of the results produced by clustering the ground-truth-label-aware gradients.

\begin{table}[ht]
\caption{Test accuracy on \cifarone\ with noise rate 60\% by repetitively adding meta samples}\label{table: repeat_run}
\small
\centering
\begin{tabular}[!h]{>{\arraybackslash}p{3cm}|>{\centering\arraybackslash}p{1.5cm}>{\centering\arraybackslash}p{1.5cm}>{\centering\arraybackslash}p{1.5cm}>{\centering\arraybackslash}p{1.5cm}} \hline\hline
Meta sample count & 20 & 30 & 40 & 50 \\ \hline
\certain & 66.15 & \underline{81.03}&81.65 & 81.57 \\
TA-VAAL &66.73 &68.63 & 60.63& 61.47\\ \hline
\ourmethodone & \textbf{78.95} & \textbf{82.10}& \textbf{83.36}& \underline{81.76}\\
\ourmethodtwo & \underline{76.46}& 80.58&\underline{81.96} & \textbf{81.93}\\ \hline\hline
\end{tabular}
\end{table}

\paragraph{Results with real label noise} Note that so far we only studied the performance of our methods by polluting the labels of the benchmark datasets in a synthetic manner, which may not occur in the real applications. We therefore follow \citesupp{wei2021learning} to add real human labeling errors to \cifarone\ and \cifartwo\ datasets and evaluate our methods in this setting. We include the experimental results of \cifartwo\ in Table \ref{table: real_noise}, which indicates that our methods can still outperform all the baseline methods in the presence of realistic labeling errors. For \cifarone, it turns out that the real human labeling errors have very little influence on the model performance and thus all of these methods only boost the performance marginally, which is thus not shown here. 


\begin{table*}[t]
\caption{Test accuracy of \cifar\ dataset with real human labeling errors}\label{table: real_noise}
\small
\centering
\begin{tabular}[!h]{>{\arraybackslash}p{1.5cm}|>{\centering\arraybackslash}p{1.5cm}} \hline\hline
 & \cifartwo \\\hline
Base model  & 49.33\\
\random & 57.54\\
\certain & 57.48\\
\uncertain & 56.23\\
\finetune &51.24\\
\ta  & 44.50\\
\craige & 57.92\\
\ourmethodone-K& 59.24\\ \hline
\ourmethodone &\textbf{59.25} \\
\ourmethodtwo &\underline{59.25}\\ \hline\hline
\end{tabular}
\end{table*}

\subsubsection{Effect of the number of meta samples} We further study the effect of the number of meta samples on the performance of our methods by continuously adding more and more meta samples. Specifically, for \cifarone\ dataset with 60\% adversarial label errors, we repetitively add 10 meta samples and run the meta re-weighting algorithm for 4 times after the warm-up phase. The results are included in Table \ref{table: repeat_run} and we only include the baseline methods which perform relatively better than other methods, e.g., \certain\ and \ta. As this table shows, our methods can consistently outperform (with performance gains up to~13\% when 20 meta samples are selected) with respect to the baseline methods.

\subsubsection{Effect of the number of sampled gradients for \ourmethodone\ and \ourmethodtwo} Recall that in Section \nameref{sec: solve_mco}, it is not possible to collect all the calculated gradients from all the iterations during the meta-reweighting training phase to evaluate MCO due to limited GPU memory. This thus motivates the idea of randomly sampling calculated gradients in \ourmethodone\ and \ourmethodtwo. We therefore studied the effect of the number of sampled gradients, i.e., the value of $K$, on the performance of our methods. Specifically, we vary $k$ between 4 and 100 on \ourmethodone\ and conduct meta-reweighting on \mnist\ dataset with 80\% adversarial labeling errors (with the same experimental setup as Section \nameref{sec: experiment_setup}). The results are summarized in Table \ref{table: mnist_rbc_varied_k}. According to this table, we can know that 
the test performance of extremely small $K$ is significantly worse than that of large $K$ (e.g, $K=4$ VS $K>10$), which thus indicates that more gradients would boost the model performance. However, when $K$ is larger than certain value, say $K > 10$ in Table \ref{table: mnist_rbc_varied_k}, no significant performance gains occur while more GPU memory is needed for containing more sampled gradients. This thus indicates that with proper value of $K$, randomly sampling gradients could perfectly balance the test performance and the GPU memory consumption.

\subsubsection{Effect of pre-training phase} We further studied the effect of the pre-training phase on the performance. When pre-training phase is not executed, we select all samples once by using our methods or the baseline methods. We include the results on \cifartwo\ dataset in Table \ref{table: pretraining}, in which we compare our methods against \certain\ and \uncertain\ since they are two relatively better baseline methods in comparison to others. 
According to this table, we can tell that both our methods and baseline methods can generally benefit from pre-training phase, thus demonstrating the benefit of the pre-training phase.

\begin{table}[ht]
\caption{Test accuracy of \ourmethodone\ on \mnist\ dataset with varied $K$}\label{table: mnist_rbc_varied_k}
\small
\centering
\begin{tabular}[!h]{c|c} \hline\hline
 $K$ & Test Accuracy \\ \hline
100 &93.01 \\
50 & 93.32\\
10 & 93.10\\
6 & 91.48\\
4 &91.68 \\\hline
\end{tabular}
\end{table}

\subsubsection{Supplemental results on class-imbalanced + label noise experiments} In addition to evaluating our methods on noisily labeled and imbalanced data separately, we also look at the combination of the two in Table (\ref{Table: cifar_imb_noise}). In these experiments, we perform an initial warmup step with 10 and 100 randomly selected meta samples and then select an additional 100 and 200 samples using the different methods for \cifarone\ and \cifartwo\ respectively. With 40\% uniform noise and class imbalance levels of 200 and 100, our methods outperform all baselines for \cifartwo\ and \cifarone\ with imbalance of 100.

\begin{table}[ht]
\caption{Test accuracy of ResNet-34 on imbalanced \cifarone\ and \cifartwo\ dataset with 40\% uniform noise}\label{Table: cifar_imb_noise}
\small
\centering
\begin{tabular}[!h]{>{\arraybackslash}p{2cm}|>{\centering\arraybackslash}p{1.8cm}>{\centering\arraybackslash}p{1.8cm}|>{\centering\arraybackslash}p{1.8cm}>{\centering\arraybackslash}p{1.8cm}} \hline\hline
 Dataset &\multicolumn{2}{c|}{\cifarone}&\multicolumn{2}{c}{\cifartwo} \\ \hline
Imbalance & 200 & 100 & 200 & 100 \\ \hline
Base Model     & 27.3 & 31.92 & 6.97 & 9.13\\
Random         & \underline{29.32} & 35.49 & 9.30 & 10.30\\
Uncertainty    & 27.91 & 35.59 & 9.16 & 10.62\\
Certainty      & 28.48 & 35.29 & \underline{9.40} & 10.27\\
Finetune       & 28.97 & 36.76 & 8.07 & 7.93\\
TA-VAAL        & \textbf{29.77} & \underline{36.89} & 9.00 & 10.33\\
\hline
\ourmethodone\ & 27.73 & \textbf{38.01} &  \textbf{9.99}& \underline{11.02}\\
\ourmethodtwo\ & 28.44 & 35.75 & 8.17 & \textbf{11.24}\\\hline\hline
\end{tabular}
\end{table}

\begin{table*}[t]
\caption{Test performance with and without pre-training with 60\% label noise}\label{table: pretraining}
\small
\centering
\begin{tabular}[!h]{>{\centering\arraybackslash}p{2cm}|>{\centering\arraybackslash}p{1.6cm}>{\centering\arraybackslash}p{1.6cm}|>{\centering\arraybackslash}p{1.6cm}>{\centering\arraybackslash}p{1.6cm}} \hline\hline
 \multirow{2}{*}{Label noise type} &\multicolumn{2}{c}{uniform}&\multicolumn{2}{|c}{adversarial} \\ \hhline{~----}
 &\makecell{with\\pretraining}&\makecell{without\\pretraining}&\makecell{with\\pretraining}&\makecell{without\\pretraining}\\ \hline
\certain  & 45.95&45.32&47.06&44.35\\
\uncertain & 36.67&35.99& 44.65&44.54\\
\ourmethodone-K &49.32&\underline{47.65}& 49.51&47.66\\ \hline
\ourmethodone & \underline{49.56} &45.27 &\underline{50.60}&\underline{48.91}\\
\ourmethodtwo & \textbf{50.88} &\textbf{50.24}&\textbf{53.14}&\textbf{52.55}\\ \hline\hline
\end{tabular}
\end{table*}

\subsection{Qualitative results}

Figure \ref{fig: visual_cluster} is a visualization using t-SNE \citesupp{van2008visualizing} of the partial sample-wise gradients collected by \ourmethodone\ as well as the cluster centroids generated by the weighted K-means clustering. As this figure shows, there exists obvious clustering structure on the sample-wise gradients, thus justifying the use of the K-means clustering algorithm. In addition, recall that in Section \nameref{sec: solve_mco}, we tailor the vanilla K-means clustering algorithm for solving MCO. Figure \ref{fig: visual_cluster} thus demonstrates the effectiveness of this tailored algorithm since the cluster centroids discovered in this manner cover all the clusters very well.

\begin{figure}
\centering
\includegraphics[width=0.4\textwidth]{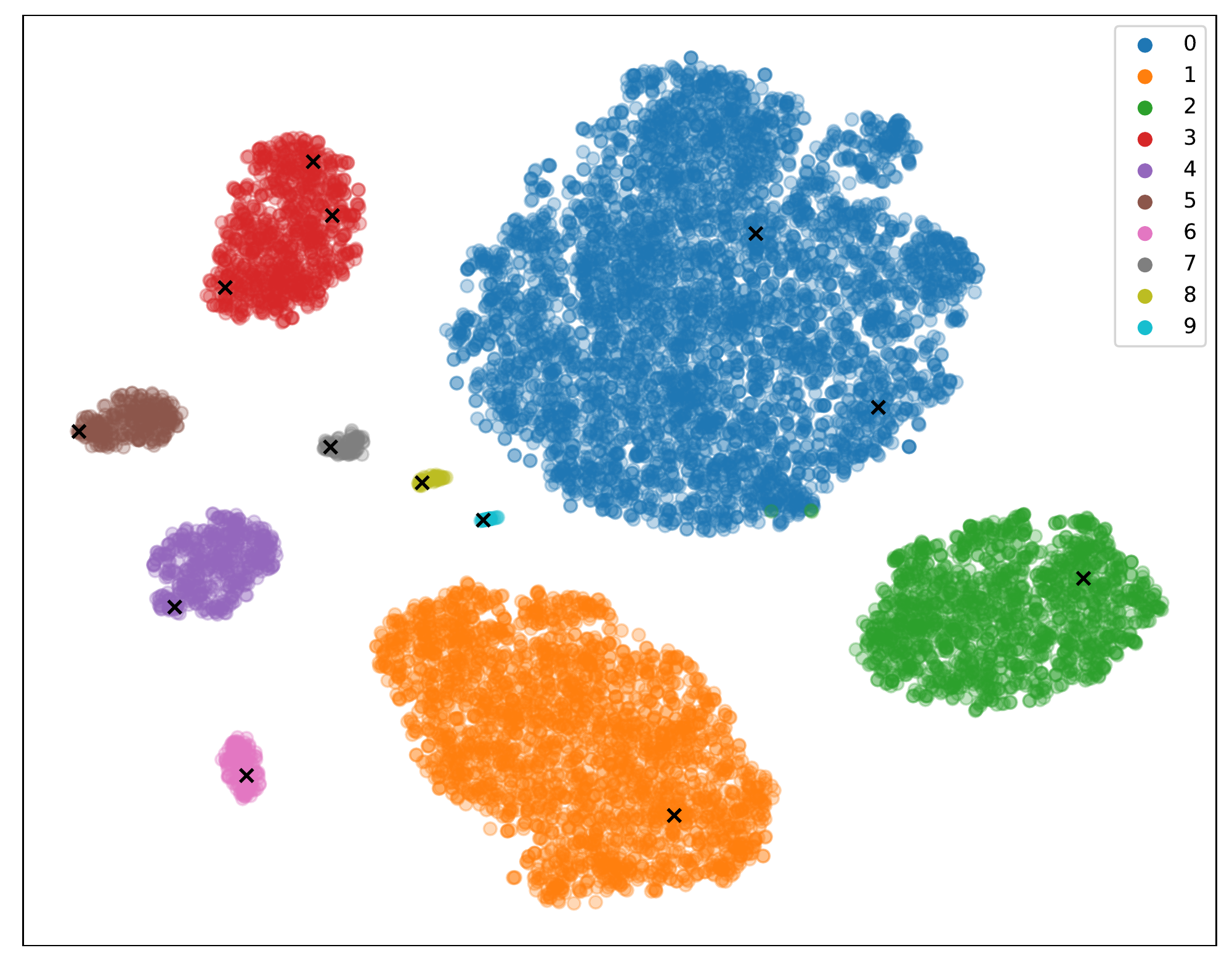}
\caption{Visualization of the partial gradients used in \ourmethodone\ on \cifarone\ with imbalance factor 200 and the corresponding cluster centroids identified by the weighted K-means clustering algorithm.}
\label{fig: visual_cluster}
\end{figure}

\section{Limitations of our work}\label{appendix: limitation}
Most of the theoretical results we present hold in a general setting except some dataset-dependent assumptions (e.g., the assumption of Theorem \ref{theorem: main}). We therefore would love to explore whether those assumptions hold for general datasets or not.

Also, our empirical claims are shown on just three standard datasets for the noisy label case and two datasets for the class imbalance case. The datasets we use are \mnist, \cifarone, and \cifartwo\ which are benchmark image datasets, which, however, do not include the datasets from other domains. Similarly, our empirical results use a ResNet model which is a standard neural network model for vision. We leave the evaluation of our methods on other models like Transformers for future work. 

We also manually introduced label noise and a class imbalance into these datasets and have not yet evaluated on a dataset which is known to contain label noise or a class imbalance. Additionally, we use two standard types of label noise in our experiments, but our method may perform differently if the distribution of noise in another dataset is significantly different from our uniform or adversarial noise. 


Furthermore, the selections of meta samples or validation samples could occur in many different scenarios, e.g., general meta learning framework \citesupp{andrychowicz2016learning} and the data valuation methods depending on a clean validation samples (see the discussion in Section \nameref{sec: related_work}). Therefore, it would be interesting to explore whether the techniques proposed in this paper could address the validation sample selection problem in more general set-up.


\section{Discussions on societal impacts}\label{appendix: social_impact}
Our framework can be useful for domains in which data is commonly noisy and imbalanced. In these settings, the sample labels are usually manually cleaned.
However, labelling in these domains is expensive and may suffer from the risk of getting incorrect labels. 
As indicated by the experiments, our methods can perform very well when the labeling budget is very small. This thus suggests that our methods can reduce the number of required perfect labels. As a consequence, more labeling effort can be spent on the few pivotal samples to get high-quality labels, rather than a large amount of low-quality labels in these domains.

\bibliographystylesupp{plainnat}
\bibliographysupp{reference2}

\end{document}